\documentclass{article}

\usepackage[total={6in,8in}]{geometry}
\usepackage[utf8]{inputenc} 
\usepackage[T1]{fontenc}    
\usepackage{hyperref}       
\usepackage{url}            
\usepackage{booktabs}       
\usepackage{amsmath}
\usepackage{amsthm}
\usepackage{amssymb}
\usepackage{amsfonts}       
\usepackage{nicefrac}       
\usepackage{microtype}      
\usepackage{rotating}
\usepackage{multirow}
\usepackage{color}
\usepackage{times}
\usepackage{natbib}
\graphicspath{ {images/} }

\title{But How Does It Work in Theory? \\ Linear SVM with Random Features}

\newtheorem{theorem}{Theorem}
\newtheorem{lemma}{Lemma}

\newtheorem{definition}{Definition}
\newtheorem{assumption}{Assumption}

\newcommand*{\thmref}[1]{Theorem~\ref{#1}}

\newcommand*{\lemref}[1]{Lemma~\ref{#1}}

\newcommand*{\figureref}[1]{Figure~\ref{#1}}

\renewcommand*{\eqref}[1]{Equation~\ref{#1}}

\author{
  Yitong Sun \\
  Department of Mathematics \\
  University of Michigan \\
  Ann Arbor, MI, 48109\\
  \texttt{syitong@umich.edu} \\
  \and
  Anna Gilbert\\
  Department of Mathematics\\
  University of Michigan\\
  \texttt{annacg@umich.edu} \\
  \and
  Ambuj Tewari \\
  Department of Statistics\\
  University of Michigan\\
  \texttt{tewaria@umich.edu} \\
}

\begin{document}
\maketitle
\newcommand*{\p}{{\operatorname{\mathbb{P}}}}
\newcommand*{\e}{{\operatornamewithlimits{\mathbb{E}}}}
\renewcommand{\d}{{\mathrm{d}}}
\renewcommand{\r}{{\mathbb{R}}}
\newcommand\Norm[1]{\left\Vert #1\right\Vert }
\newcommand{\argmin}{\operatornamewithlimits{argmin}}
\newcommand{\argmax}{\operatornamewithlimits{argmax}}
\newcommand\ip[2]{\langle#1,#2\rangle}
\renewcommand{\t}{\intercal}
\newcommand{\ep}{\varepsilon}
\newcommand\setbar[1]{\left|#1\right.}
\newcommand{\COS}{\operatorname{COS}}

\begin{abstract}
We prove that, under low noise assumptions, the support vector machine with $N\ll m$
random features (RFSVM) can achieve the learning rate faster than $O(1/\sqrt{m})$ on
a training set with $m$ samples when an optimized feature map is used.
Our work extends the previous fast rate analysis
of random features method from least square loss to 0-1 loss.
We also show that the reweighted feature selection method, which
approximates the optimized feature map,
helps improve the performance of RFSVM in experiments on a synthetic data set.
\end{abstract}


\section{Introduction}

Kernel methods such as kernel support vector machines (KSVMs) have been widely and
successfully used in classification tasks (\cite{Steinwart2008}). The power
of kernel methods comes from the fact that they implicitly map the data to a
high dimensional, or even infinite dimensional, feature space, where points
with different labels can be separated by a linear functional.
It is, however, time-consuming to compute the kernel matrix
and thus KSVMs do not scale well to extremely large datasets.
To overcome this challenge, researchers have developed various ways to efficiently
approximate the kernel matrix or the kernel function.

The random features method, proposed by~\cite{Rahimi2008}, maps the data
to a finite dimensional feature space as a random approximation to the feature
space of RBF kernels.  With explicit finite
dimensional feature vectors available, the original KSVM is converted to
a linear support vector machine (LSVM), that can be trained by faster algorithms
(\cite{Shalev-Shwartz2011,Hsieh2008}) and tested in constant time with respect to the number of training
samples. For example, \cite{Huang2014} and \cite{Dai2014} applied RFSVM or its variant to datasets containing
millions of data points and achieved performance comparable to deep neural nets.

Despite solid practical performance, there is a lack of clear theoretical
guarantees for the learning rate of RFSVM.
\cite{Rahimi2009} obtained a risk gap of order $O(1/\sqrt{N})$ between
the best RFSVM and KSVM classifiers, where $N$ is the number of features.
Although the order of the error bound is correct for
general cases, it is too pessimistic to justify or to explain the actual
computational benefits of random features method in practice.
And the model is formulated as a constrained
optimization problem, which is rarely used in practice.

\cite{cortes2010} and \cite{Sutherland2015}
considered the performance of RFSVM as a perturbed optimization problem, using
the fact that the dual form of KSVM is a constrained quadratic optimization
problem. Although the maximizer of a quadratic function depends continuously
on the quadratic form, its dependence is weak and thus, both papers failed to
obtain an informative bound for the excess risk of RFSVM in the
classification problem. In particular, such an approach requires RFSVM and
KSVM to be compared under the same hyper-parameters.
This assumption is, in fact, problematic because the optimal configuration of
hyper-parameters of RFSVM is not necessarily the same as those for the corresponding KSVM.
In this sense, RFSVM is more like an independent learning model instead of
just an approximation to KSVM.

In regression settings, the learning rate of random features method was studied by
\cite{Rudi2017} under the assumption that the regression function is in the
RKHS, namely the \emph{realizable} case.
They show that the uniform feature sampling only
requires $ O(\sqrt{m}\log(m)) $ features to achieve $ O(1/\sqrt{m}) $ risk of squared
loss. They further show that a data-dependent sampling can achieve
a rate of $ O(1/m^{\alpha}) $, where $ 1/2\le\alpha\le 1 $, with even fewer
features, when the regression function is sufficiently smooth and the spectrum
of the kernel integral operator decays sufficiently fast. However, the method
leading to these results depends on the closed form of the least squares solution,
and thus we cannot easily extend these results to non-smooth loss functions used in RFSVM.
\cite{Bach2017} recently shows that for any given approximation accuracy,
the number of random features required is given by the degrees of freedom of the
kernel operator under such an accuracy level, when optimized features are available.
This result is crucial for sample
complexity analysis of RFSVM, though not many details are provided on this topic
in Bach's work.

In this paper, we investigate the performance of RFSVM formulated as a regularized
optimization problem on classification tasks. In contrast to the slow learning rate in previous
results by \cite{Rahimi2009} and \cite{Bach2017}, we show, for the first
time, that RFSVM can achieve fast learning rate with far fewer features
than the number of samples when the optimized features (see Assumption~\ref{opt-features})
are available, and thus we justify the potential computational benefits
of RFSVM on classification tasks. We mainly considered two learning scenarios:
the realizable case, and then unrealizable case, where the Bayes classifier
does not belong to the RKHS of the feature
map. In particular, our contributions are threefold:
\begin{enumerate}
    \item We prove that under Massart's low noise condition, with an optimized feature map,
    RFSVM can achieve a learning rate of $ \tilde{O}(m^{-\frac{c_2}{1+c_2}}) $
    \footnote{$ \tilde{O}(n) $ represents a quantity less than $ Cn\log^k(n) $
    for some $ k $. },
    with $ \tilde{O}(m^{\frac{2}{2+c_2}}) $
    number of features when the Bayes classifier belongs to the RKHS of a
    kernel whose spectrum decays polynomially ($ \lambda_i = O(i^{-c_2} $)).
    When the decay rate of the spectrum of kernel operator is sub-exponential, the
    learning rate can be improved to $ \tilde{O}(1/m) $ with only
    $ \tilde{O}(\ln^{d}(m)) $ number of features.
    \item When the Bayes classifier satisfies the separation condition; that is,
    when the two classes of points are apart by a positive distance,
    we prove that the RFSVM using an optimized feature
    map corresponding to Gaussian kernel can achieve a learning rate
    of $ \tilde{O}(1/m) $ with $ \tilde{O}(\ln^{2d}(m)) $ number of features.
    \item Our theoretical analysis suggests reweighting random features before training.
    We confirm its benefit in our experiments over synthetic data sets.
\end{enumerate}

We begin in Section~\ref{sec:prelim} with a brief introduction of RKHS,
random features and the problem formulation, and set up the
notations we use throughout the rest of the paper. In
Section~\ref{sec:main_results}, we provide our main theoretical results
(see the appendices for the proofs), and
in Section~\ref{sec:experiments}, we verify the performance of RFSVM in experiments.
In particular, we show the improvement brought by the reweighted feature selection
algorithm. The conclusion and some open questions are summarized at the end.
The proofs of our main theorems follow from a combination of the sample complexity analysis
scheme used by \cite{Steinwart2008} and the approximation error result of
\cite{Bach2017}. The fast rate is achieved due to the fact that the Rademacher
complexity of the RKHS of $ N $ random features and with regularization parameter
$ \lambda $ is only $ O(\sqrt{N\log(1/\lambda)}) $, while $ N $ and $ 1/\lambda $
need not be too large to control the approximation error
when optimized features are available. Detailed proofs and more experimental results are provided
in the Appendices for interested readers.

\section{Preliminaries and notations}
\label{sec:prelim}

Throughout this paper, a labeled data point is a point $(x,y)$ in
$\mathcal{X}\times\left\{ -1,1\right\} $, where $\mathcal{X}$ is a bounded
subset of $\r^{d}$. $ \mathcal{X}\times\{-1,1\} $ is equipped with
a probability distribution $ \mathbb{P} $.

\subsection{Kernels and Random Features}
A positive definite
kernel function $k\left(x,x'\right)$ defined on $\mathcal{X}\times\mathcal{X}$
determines the unique corresponding reproducing kernel
Hilbert space (RKHS), denoted by $\mathcal{F}_k$. A map $\phi$ from the
data space $\mathcal{X}$ to a Hilbert space $H$ such that
$\ip{\phi\left(x\right)}{\phi\left(x'\right)}_{H}=k\left(x,x'\right)$
is called a feature map of $ k $ and $H$ is called a feature space.
For any $f\in\mathcal{F}$, there exists
an $h\in H$ such that $\langle h,\phi(x)\rangle_H=f(x)$, and
the infimum of the norms of all such $ h $s is equal to $\Vert f\Vert_\mathcal{F}$.
On the other hand, given any feature map $ \phi $ into $ H $, a kernel function
is defined by the equation above, and we call $ \mathcal{F}_k $ the RKHS
corresponding to $ \phi $, denoted by $ \mathcal{F}_\phi $.

A common choice of feature space is the $ L^2 $ space of a probability space
$ (\omega,\Omega,\nu) $. An important observation is that for any probability
density function $ q(\omega) $ defined on $ \Omega $,
$ \phi(\omega;x)/\sqrt{q(\omega)} $ with probability
measure $ q(\omega)\mathrm{d}\nu(\omega) $ defines the same kernel function with
the feature map $ \phi(\omega;x) $ under the distribution $ \nu $.
One can sample the image of $ x $ under the feature map $ \phi $,
an $ L^2 $ function $ \phi(\omega;x) $, at points $ \{\omega_1,\ldots,\omega_N\} $
according to the probability distribution $ \nu $ to approximately represent $ x $.
Then the vector in $ \r^N $ is called a random feature vector of $ x $,
denoted by $ \phi_N(x) $. The corresponding kernel function determined by $ \phi_N $
is denoted by $ k_N $.

A well-known construction of random features is the random Fourier features proposed
by \cite{Rahimi2008}. The feature map is defined as follows,
\begin{align*}
  \phi:\mathcal{X} & \to L^2(\r^d,\nu)\oplus L^2(\r^d,\nu) \\
  x & \mapsto \left(\cos\left(\omega\cdot x\right),
    \sin\left(\omega\cdot x\right)\right)\,.
\end{align*}
And the corresponding random feature vector is
\begin{equation*}
    \phi_N(x) = \frac{1}{\sqrt{N}}\left(
    \cos\left(\omega\cdot x\right),\cdots,
    \cos\left(\omega\cdot x\right),
    \sin\left(\omega\cdot x\right),\cdots,
    \sin\left(\omega\cdot x\right)
    \right)^{\intercal}\,,
\end{equation*}
where $ \omega_i $s are sampled according to $ \nu $. Different choices of
$ \nu $ define different translation invariant kernels (see \cite{Rahimi2008}).
When $ \nu $ is the normal distribution with mean $ 0 $ and variance $ \gamma^{-2} $,
the kernel function defined by the feature map is Gaussian kernel with bandwidth
parameter $ \gamma $,
\begin{equation*}
  k_\gamma(x,x') = \exp\left(-\frac{\Vert x-x'\Vert^2}{2\gamma^2}\right)\,.
\end{equation*}
Equivalently, we may consider the feature map $ \phi_\gamma(\omega;x):=\phi(\omega/\gamma;x) $
with $ \nu $ being standard normal distribution.

A more general and more abstract feature map can be constructed using an orthonormal
set of $ L^2(\mathcal{X},\p_\mathcal{X}) $. Given the orthonormal set
$ \{e_i\} $ consisting of bounded functions,
and a nonnegative sequence $ (\lambda_i) \in \ell^1 $,
we can define a feature map
\begin{equation*}
  \phi(\omega;x)=\sum_{i=1}^\infty \sqrt{\lambda_i}e_i(x)e_i(\omega)\,,
\end{equation*}
with feature space $ L^2(\omega,\mathcal{X},\mathbb{P}_\mathcal{X}) $.
The corresponding kernel is given by
$k(x,x')=\sum_{i=1}^\infty \lambda_i e_i(x)e_i(x')$.
The feature map and the kernel function are well defined because of the boundedness
assumption on $ \{ e_i \} $.  A similar representation can be obtained for a
continuous kernel function on a compact set by Mercer's Theorem (\cite{Lax2002}).

Every positive definite kernel function $ k $ satisfying that
$ \int k(x,x)~\d\p_\mathcal{X}(x) < \infty $
defines an integral operator on $ L^2(x,\mathcal{X},\p_\mathcal{X}) $ by
\begin{align*}
  \Sigma:L^2(\mathcal{X},\mathbb{P}_\mathcal{X}) & \to
  L^2(\mathcal{X},\mathbb{P}_\mathcal{X}) \\
        f &\mapsto \int_\mathcal{X} k(x,t)f(t)~
        \mathrm{d}\mathbb{P}_\mathcal{X}(t)\,.
\end{align*}
$ \Sigma $ is of trace class with trace norm $ \int k(x,x)~\d\p_\mathcal{X}(x) $.
When the integral operator is determined by a feature map $ \phi $, we denote
it by $ \Sigma_\phi $, and the $ i $th eigenvalue in a descending order by
$ \lambda_i(\Sigma_\phi) $. Note that the regularization paramter is also denoted
by $ \lambda $ but without a subscript. The decay rate of the spectrum of $ \Sigma_\phi $ plays an
important role in the analysis of learning rate of random features method.

\subsection{Formulation of Support Vector Machine}
Given $m$ samples $\{ (x_{i},y_{i})\} _{i=1}^{m}$ generated
i.i.d. by $\p$ and a function $f:\mathcal{X}\to\r$, usually called a hypothesis in the
machine learning context, the empirical and expected risks with respect to
the loss function $\ell$ are defined by
\[
R_{m}^{\ell}\left(f\right):=
\frac{1}{m}\sum_{i=1}^{m}\ell\left(y_{i},f\left(x_{i}\right)\right)
\quad
R_{\p}^{\ell}\left(f\right):=
\e_{\left(x,y\right)\sim\p}\ell\left(y,f\left(x\right)\right)\,,
\]
respectively.

The 0-1 loss is commonly used to measure the performance of classifiers:
\[
\ell^{\mathrm{0-1}}(y,f(x)) =
\begin{cases}
    1 & \text{if } f(x)y \le 0;\\
    0 & \text{if } f(x)y > 0.
\end{cases}
\]
The function that minimizes the expected risk under 0-1 loss
is called the Bayes classifier, defined by
\[
    f^*_{\p}(x):=\operatorname{sgn}\left(\e[y\mid x]\right)\,.
\]
The goal of the classification task is to find a good hypothesis $f$ with small
excess risk $R_{\p}^{\mathrm{0-1}}(f)-R_{\p}^{\mathrm{0-1}}(f^*_{\p})$.
And to find the good hypothesis based on the samples, one minimizes the empirical risk.
However, using 0-1 loss, it is hard to find the global minimizer of
the empirical risk because the loss function is discontinuous and non-convex.
A popular surrogate loss function in practice is the hinge loss:
$ \ell^h(f) = \max(0,1-yf(x)) $, which guarantees that
\begin{equation*}
  R^{h}_\p(f) - \inf_f R^{h}_{\p}(f) \ge
  R^{\mathrm{0-1}}_{\p}(f) - R^{\mathrm{0-1}}_{\p}(f^*_{\p})\,,
\end{equation*}
where $ R^h $ means $ R^{\ell^h} $ and $ R^{\mathrm{0-1}} $ means $ R^{\ell^{\mathrm{0-1}}} $.
See \cite{Steinwart2008} for more details.

A regularizer can be added into the optimization objective
with a scalar multiplier $ \lambda $ to avoid overfitting the random samples.
Throughout this paper, we consider the most commonly used $ \ell^2 $ regularization.
Therefore, the solution of the binary classification problem is given by minimizing
the following objective
\begin{equation*}
    R_{m,\lambda}(f)=R^h_m(f)+\frac{\lambda}{2}\Norm f_{\mathcal{F}}^2\,,
\end{equation*}
over a hypothesis class $ \mathcal{F} $. When $ \mathcal{F} $ is the RKHS of
some kernel function, the algorithm described above is called kernel support
vector machine. Note that for technical convenience, we do not include the bias
term in the formulation of hypothesis so that all these functions are from the
RKHS instead of the product space of RKHS and $ \mathbb{R} $ (see Chapter 1
of \cite{Steinwart2008} for more explanation of such a convention).
Note that $R_{m,\lambda}$ is strongly convex and thus
the infimum will be attained by some function in $\mathcal{F}$. We denote it by
$f_{m,\lambda}$.

When random features $ \phi_N $ and the corresponding RKHS are considered,
we add $ N $ into the subscripts of the notations defined above to
indicate the number of random features. For example $ \mathcal{F}_N $ for the RKHS,
$ f_{N,m,\lambda} $ for the solution of the optimization problem.

\section{Main Results}
\label{sec:main_results}

In this section we state our main results on the fast learning rates of RFSVM
in different scenarios.

First, we need the following assumption on the distribution of data, which is
required for all the results in this paper.
\begin{assumption}
  \label{massart}
  There exists $ V\ge 2 $ such that
  \begin{equation*}
    \vert\e_{(x,y)\sim\p}[y\mid x]\vert \ge 2/V\,.
  \end{equation*}
\end{assumption}
This assumption is called Massart's low noise condition in many references (see
for example \cite{Koltchinskii2011}). When $ V=2 $ then all the data points
have deterministic labels almost surely. Therefore it is easier to learn the
true classifier based on observations. In the proof, Massart's low noise
condition guarantees the variance condition (\cite{Steinwart2008})
\begin{equation}
  \label{eq:variance}
  \e[(\ell^h(f(x))-\ell^h(f^*_{\p}(x)))^2] \le V(R^h(f) - R^h(f^*_{\p}))\,,
\end{equation}
which is a common requirement for the fast rate results. Massart's condition is
an extreme case of a more general low noise condition, called Tsybakov's condition.
For the simplicity of the theorem, we only consider Massart's condition in our
work, but our main results can be generalized to Tsybakov's condition.

The second assumption is about the quality of random features. It was first introduced
in \cite{Bach2017}'s approximation results.
\begin{assumption}
  \label{opt-features}
  A feature map $ \phi:\mathcal{X}\to L^2(\omega,\Omega,\nu)) $ is called optimized if
  there exists a small constant $ \mu_0 $ such that for any $ \mu\le\mu_0 $,
  \begin{equation*}
    \sup_{\omega\in\Omega} \Vert (\Sigma+\mu I)^{-1/2}\phi(\omega;x) \Vert_{L^2(\mathbb{P})}^2
    \le \mathrm{tr}(\Sigma(\Sigma + \mu I)^{-1})
    = \sum_{i=1}^\infty \frac{\lambda_i(\Sigma)}{\lambda_i(\Sigma) + \mu}\,.
  \end{equation*}
\end{assumption}
For any given $ \mu $, the quantity on the left hand side of the inequality is called
leverage score with respect to $ \mu $, which is directly related with
the number of features required to approximate
a function in the RKHS of $ \phi $. The quantity on the right hand
side is called degrees of freedom by \cite{Bach2017} and effective dimension by
\cite{Rudi2017}, denoted by $ d(\mu) $. Note that whatever the RKHS is, we can
always construct optimized feature map for it. In the Appendix~\ref{app:opt-feature} we describe two
examples of constructing optimized feature map. When a feature map is optimized, it is easy to control its
leverage score by the decay rate of the spectrum of $ \Sigma $, as described below.

\begin{definition}
  We say that the spectrum of $ \Sigma:L^2(\mathcal{X},\p)\to L^2(\mathcal{X},\p) $
  decays at a polynomial rate if there exist $ c_1>0 $ and $ c_2>1 $ such that
  \begin{equation*}
    \lambda_i(\Sigma) \le c_1 i^{-c_2}\,.
  \end{equation*}
  We say that it decays sub-exponentially if there exist $ c_3,c_4 > 0 $ such that
  \begin{equation*}
    \lambda_i(\Sigma) \le c_3 \exp(-c_4 i^{1/d})\,.
  \end{equation*}
\end{definition}
The decay rate of the spectrum of $ \Sigma $ characterizes the capacity
of the hypothesis space to search for the solution, which further determines the
number of random features required in the learning process. Indeed, when the
feature map is optimized, the number of features required to approximate a function
in the RKHS with accuracy $ O(\sqrt{\mu}) $ is upper bounded by $ O(d(\mu)\ln(d(\mu))) $.
When the spectrum decays polynomially, the degrees of freedom $ d(\mu) $ is $ O(\mu^{-1/c_2}) $,
and when it decays sub-exponentially, $ d(\mu) $ is $ O(\ln^d(c_3/\mu)) $
(see \lemref{lem:dof} in Appendix~\ref{app:realizable} for details). 
Examples on the kernels with polynomial and sub-exponential spectrum decays can be found
in \cite{Bach2017}. Our proof of Lemma~\ref{lem:spectrum} also provides some
useful discussion.

With these preparations, we can state our first theorem now.
\begin{theorem}
  \label{thm:realizable}
  Assume that $ \p $ satisfies Assumption~\ref{massart}, and the feature map
  $ \phi $ satisfies Assumption~\ref{opt-features}. If $ f^*_\p\in\mathcal{F}_\phi $
  with $ \Vert f^*_\p\Vert_{\mathcal{F}_\phi} \le R $. Then when
  the spectrum of $ \Sigma_\phi $ decays polynomially, by choosing
  \begin{align*}
    \lambda & = m^{-\frac{c_2}{2+c_2}} \\
    N & = 10C_{c_1,c_2}m^{\frac{2}{2+c_2}}(\ln(32C_{c_1,c_2}m^{\frac{2}{2+c_2}})+\ln(1/\delta))\,,
  \end{align*}
  we have
  \begin{equation*}
    R^{\mathrm{0-1}}_\p(f_{N,m,\lambda}) - R^{\mathrm{0-1}}_\p(f^*_\p)
    \le C_{c_1,c_2,V,R}m^{-\frac{c_2}{2+c_2}}\left((\ln(1/\delta)+\ln(m))\right)\,,
  \end{equation*}
  with probability $ 1-4\delta $.
  When the spectrum of $ \Sigma_\phi $ decays sub-exponentially, by choosing
  \begin{align*}
    \lambda & =1/m \\
    N & = 25C_{d,c_4}\ln^d(m)(\ln(80C_{d,c_4}\ln^d(m))+\ln(1/\delta))\,,
  \end{align*}
  we have
  \begin{align*}
    R^{\mathrm{0-1}}_\p(f_{N,m,\lambda}) - R^{\mathrm{0-1}}_\p(f^*_\p)
    & \le C_{c_3,c_4,d,R,V}\frac{1}{m}\left(\log^{d+2}(m)+\log(1/\delta)\right)\,,
  \end{align*}
  with probability $ 1-4\delta $ when $ m \ge \exp((c_4\vee\frac{1}{c_4})d^2/2) $.
\end{theorem}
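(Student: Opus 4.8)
The plan is to pass from the $0$-$1$ excess risk to the hinge excess risk and then run a localized, regularized oracle inequality in the scheme of \cite{Steinwart2008}, feeding it the variance bound from Massart's condition and the random-feature approximation guarantee of \cite{Bach2017}. First I would note that the pointwise minimizer over $t\in\r$ of the conditional hinge risk $\eta\max(0,1-t)+(1-\eta)\max(0,1+t)$, with $\eta=\p(y=1\mid x)$, is $\operatorname{sgn}(2\eta-1)=\operatorname{sgn}(\e[y\mid x])=f^*_\p$; hence $f^*_\p$ also minimizes the expected hinge risk, and the comparison inequality of Section~\ref{sec:prelim} gives
\begin{equation*}
  R^{\mathrm{0-1}}_\p(f_{N,m,\lambda}) - R^{\mathrm{0-1}}_\p(f^*_\p) \le R^h_\p(f_{N,m,\lambda}) - R^h_\p(f^*_\p)\,.
\end{equation*}
Assumption~\ref{massart} then supplies the variance condition \eqref{eq:variance}, which is precisely the Bernstein-type ingredient that upgrades the slow $O(1/\sqrt m)$ rate to a fast one.

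For the bias I would invoke \cite{Bach2017}. Since $\phi$ is optimized (Assumption~\ref{opt-features}) and $f^*_\p\in\mathcal{F}_\phi$ with norm at most $R$, drawing $N\gtrsim d(\mu)\ln(d(\mu)/\delta)$ features produces, with probability $1-\delta$ over the $\omega_i$, some $g\in\mathcal{F}_N$ with $\Vert g-f^*_\p\Vert_{L^2(\p)}=O(\sqrt\mu)$ and $\Vert g\Vert_{\mathcal{F}_N}=O(R)$. Because the hinge loss is $1$-Lipschitz, $R^h_\p(g)-R^h_\p(f^*_\p)\le\Vert g-f^*_\p\Vert_{L^2(\p)}=O(\sqrt\mu)$, so the regularized approximation error obeys $A(\lambda):=\inf_{f\in\mathcal{F}_N}\big[\tfrac{\lambda}{2}\Vert f\Vert_{\mathcal{F}_N}^2+R^h_\p(f)-R^h_\p(f^*_\p)\big]=O(\lambda R^2+\sqrt\mu)$. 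The decisive calibration is $\mu\asymp\lambda^2$, so that $\sqrt\mu\asymp\lambda$ and $A(\lambda)=O(\lambda)$; this is exactly what forces the feature count to be $N\asymp d(\lambda^2)\ln(d(\lambda^2)/\delta)$ rather than $d(\lambda)$.

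For the variance term I would use the localized oracle inequality for $\ell^2$-regularized empirical risk minimization. Since $R_{m,\lambda}(f_{N,m,\lambda})\le R_{m,\lambda}(0)=1$, the estimator satisfies $\Vert f_{N,m,\lambda}\Vert_{\mathcal{F}_N}\le\sqrt{2/\lambda}$ and, as $\Vert\phi_N(x)\Vert\le 1$, also $\Vert f_{N,m,\lambda}\Vert_\infty\le\sqrt{2/\lambda}$. The hypothesis class is $N$-dimensional, so the localized Rademacher complexity of the loss class at excess-risk level $r$ scales like $\sqrt{N V r\ln(1/\lambda)/m}$, the $\sqrt{Vr}$ coming from the variance condition and the $\ln(1/\lambda)$ from the range $\sqrt{2/\lambda}$ of the functions; its fixed point yields an estimation error of order $\tilde O(NV\ln(1/\lambda)/m)$. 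Balancing $A(\lambda)=O(\lambda)$ against this $\tilde O(N/m)$ term asks for $N/m\asymp\lambda$ together with $N\asymp d(\lambda^2)$.

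Finally I would insert the spectral estimates of \lemref{lem:dof}. In the polynomial case $d(\lambda^2)=O(\lambda^{-2/c_2})$, so $N\asymp\lambda^{-2/c_2}$ and $N/m\asymp\lambda$ are solved by $\lambda=m^{-c_2/(2+c_2)}$, $N=\tilde O(m^{2/(2+c_2)})$, producing the first rate. In the sub-exponential case $d(\lambda^2)=O(\ln^d(c_3/\lambda^2))$, the choice $\lambda=1/m$ gives $N=\tilde O(\ln^d m)$ and estimation error $\tilde O(1/m)$, producing the second rate; the $\log^{d+2}(m)$ factor collects the $\ln^d m$ feature count, the $\ln(1/\lambda)=\ln m$ range factor, and the remaining logarithmic terms of the oracle inequality, while the hypothesis $m\ge\exp((c_4\vee 1/c_4)d^2/2)$ is what validates the estimate for $d(\lambda^2)$. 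The final $1-4\delta$ confidence comes from a union bound over the feature-approximation event, the empirical-process concentration event, and the auxiliary events of the oracle inequality. I expect the hardest step to be the estimation bound: one must prove a genuinely fast oracle inequality for the non-smooth hinge loss over the \emph{random} RKHS $\mathcal{F}_N$, controlling two independent sources of randomness (the samples $x_i$ and the features $\omega_i$) at once and ensuring that the approximant $g$ and the estimator $f_{N,m,\lambda}$ inhabit the same random space, so that the bias and variance estimates actually compose.
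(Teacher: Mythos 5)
Your proposal is correct and follows essentially the same route as the paper's own proof: the hinge-loss comparison inequality plus Massart's variance condition, Bach's optimized-feature approximation bound with the calibration $\mu\asymp\lambda^2$, the localized Rademacher fixed-point oracle inequality (Theorem~7.20 of \cite{Steinwart2008}) with the $\sqrt{N\log(1/\lambda)\,Vr/m}$ complexity bound, the degrees-of-freedom estimates of \lemref{lem:dof}, and the same parameter balancing (including the role of $m\ge\exp((c_4\vee\tfrac{1}{c_4})d^2/2)$ and the $1-4\delta$ union bound). The step you flag as hardest is indeed where the paper spends its effort, via the entropy-number computation in Lemmas~\ref{lem:emprad-RFSVM}--\ref{lem:upper-bd-r}.
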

This theorem characterizes the learning rate of RFSVM in realizable cases; that is, when
the Bayes classifier belongs to the RKHS of the feature map. For polynomially
decaying spectrum, when $ c_2>2 $, we get a learning rate faster than $ 1/\sqrt{m} $.
\cite{Rudi2017} obtained a similar fast
learning rate for kernel ridge regression with random features (RFKRR), assuming
polynomial decay of the spectrum of $ \Sigma_\phi $ and the existence of a minimizer
of the risk in $ \mathcal{F}_\phi $. Our theorem extends their result to
classification problems and exponential decay spectrum. However, we have to
use a stronger assumption that $ f^*_\p \in \mathcal{F}_\phi $ so that the low noise condition can
be applied to derive the variance condition. For RFKRR, the rate faster than
$ O(1/\sqrt{m}) $ will be achieved whenever $ c_2>1 $, and the number of features
required is only square root of our result. We think that this is mainly caused
by the fact that their surrogate loss is squared. The result for the sub-exponentially
decaying spectrum is not investigated for RFKRR, so we cannot make a comparison.
We believe that this is the first result showing that RFSVM can achieve $ \tilde{O}(1/m) $
with only $ \tilde{O}(\ln^d(m)) $ features. Note however that when $ d $ is large,
the sub-exponential case requires a large number of samples, even possibly larger
than the polynomial case. This is clearly an artifact of our analysis since we can always use
the polynomial case to provide an upper bound! We therefore suspect that there is considerable room for
improving our analysis of high dimensional data in the sub-exponential decay case.
In particular, removing the exponential dependence on $ d $ under reasonable assumptions
is an interesting direction for future work.

To remove the realizability assumption, we provide our second theorem,
on the learning rate of RFSVM in unrealizable case. We focus
on the random features corresponding to
the Gaussian kernel as introduced in Section~\ref{sec:prelim}.
When the Bayes classifier does not belong to the RKHS,
we need an approximation theorem to estimate the gap of risks. The
approximation property of RKHS of Gaussian kernel has been studied in
\cite{Steinwart2008}, where the margin noise exponent is defined to derive
the risk gap. Here we introduce the simpler and stronger separation condition,
which leads to a strong result.

The points in $\mathcal{X}$ can be collected in to two sets according to
their labels as follows,
\begin{align*}
    \mathcal{X}_1 & :=\{x\in\mathcal{X}\mid \e(y\mid x)>0\} \\
    \mathcal{X}_{-1} & :=\{x\in\mathcal{X}\mid \e(y\mid x)<0\}\,.
\end{align*}
The distance of a point $ x\in \mathcal{X}_i $ to the set $ \mathcal{X}_{-i} $ is denoted
by $ \Delta(x) $.
\begin{assumption}
  \label{separation}
  We say that the data distribution satisfies a separation condition if there exists
  $ \tau > 0 $ such that $ \p_\mathcal{X}({\Delta(x)<\tau})=0 $.
\end{assumption}
Intuitively, Assumption~\ref{separation} requires the two classes to be
far apart from each other almost surely. This separation assumption
is an extreme case when the margin noise exponent goes to infinity.

The separation condition characterizes a different aspect of data distribution from
Massart's low noise condition. Massart's low noise condition guarantees that
the random samples represent the distribution behind them accurately,
while the separation condition guarantees
the existence of a smooth, in the sense of small derivatives, function achieving
the same risk with the Bayes classifier.

With both assumptions imposed on $ \p $, we can get a fast learning rate
of $ \ln^{2d+1}m/m $ with only $ \ln^{2d}(m) $ random features, as stated in the
following theorem.
\begin{theorem}
    \label{thm:unrealizable}
    Assume that $ \mathcal{X} $ is bounded by radius $ \rho $. The data
    distribution has density function upper bounded by a constant $ B $, and satisfies
    Assumption~\ref{massart} and \ref{separation}. Then by choosing
    \begin{equation*}
      \lambda=1/m \quad \gamma=\tau/\sqrt{\ln m}
      \quad N=C_{\tau,d,\rho}\ln^{2d}m(\ln\ln m+\ln(1/\delta)) \,,
    \end{equation*}
    the RFSVM using an optimized feature map corresponding to the Gaussian kernel
    with bandwidth $ \gamma $ achieves the learning rate
    \begin{equation*}
      R^{\mathrm{0-1}}_\p(f_{N,m,\lambda}) - R^{\mathrm{0-1}}_\p(f^*_\p)
      \le C_{\tau,V,d,\rho,B}\frac{\ln^{2d+1}(m)(\ln\ln(m) + \ln(1/\delta))}{m}\,,
    \end{equation*}
    with probability greater than $ 1-4\delta $ for $ m\ge m_0 $, where $ m_0 $ depends
    on $ \tau,\rho,d $.
\end{theorem}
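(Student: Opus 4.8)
The plan is to reduce the excess $0$-$1$ risk to the excess hinge risk via the calibration inequality stated in Section~\ref{sec:prelim}, and then to control the latter by a regularized-SVM oracle inequality in the spirit of \cite{Steinwart2008}, specialized to the random-feature space $\mathcal{F}_N$. Since Assumption~\ref{massart} gives $|\e[y\mid x]|\ge 2/V>0$ almost surely, the Bayes classifier $f^*_\p=\operatorname{sgn}(\e[y\mid x])$ takes values in $\{-1,+1\}$ and is exactly the pointwise minimizer of the conditional hinge risk, so $\inf_f R^h_\p(f)=R^h_\p(f^*_\p)$. Writing $R^h_{\p,\lambda}(f)=R^h_\p(f)+\tfrac{\lambda}{2}\Norm{f}_{\mathcal{F}_N}^2$ and using that the regularizer is nonnegative, I would split
\[
R^h_\p(f_{N,m,\lambda}) - R^h_\p(f^*_\p) \le
\underbrace{\bigl[R^h_{\p,\lambda}(f_{N,m,\lambda}) - \inf_{f\in\mathcal{F}_N} R^h_{\p,\lambda}(f)\bigr]}_{\text{sample error}}
+ \underbrace{\bigl[\inf_{f\in\mathcal{F}_N}R^h_{\p,\lambda}(f) - R^h_\p(f^*_\p)\bigr]}_{\text{approx.\ error }\mathcal{A}(\lambda)}.
\]

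For the sample error, Assumption~\ref{massart} supplies, via \eqref{eq:variance}, the variance bound that drives the fast rate. The optimized-feature Assumption~\ref{opt-features} together with the boundedness of $\cos,\sin$ controls both $\Norm{f}_\infty$ and the loss, so a local-Rademacher/peeling argument applies. Because $\mathcal{F}_N$ is finite dimensional, its Rademacher complexity at norm scale $\lambda^{-1/2}$ is $O(\sqrt{N\log(1/\lambda)})$; combining this with the variance condition, the fixed point of the local complexity is of order $N\log(1/\lambda)/m$, so the sample error is $\tilde O\big(\tfrac{N\log(1/\lambda)+\log(1/\delta)}{m}\big)$. With $\lambda=1/m$ and $N=\tilde O(\ln^{2d}m)$ this is precisely $\tilde O(\ln^{2d+1}m/m)$, the dominant term in the claimed rate.

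The heart of the argument is $\mathcal{A}(\lambda)$, which I would bound in two stages. \emph{Population stage:} using the separation condition (Assumption~\ref{separation}) I would construct a smooth target $g\in\mathcal{F}_\gamma$, e.g.\ a Gaussian-smoothed version of $f^*_\p$ at bandwidth $\gamma$, which classifies correctly with margin on the support of $\p_\mathcal{X}$. Because the classes are $\tau$-separated, the leakage across the margin is exponentially small in $(\tau/\gamma)^2$, giving $R^h_\p(g)-R^h_\p(f^*_\p)=O(\exp(-c(\tau/\gamma)^2))$ while $\Norm{g}_{\mathcal{F}_\gamma}^2=O(\gamma^{-d})$ (up to $\rho$ and the density bound $B$). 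The choice $\gamma=\tau/\sqrt{\ln m}$ makes the risk gap polynomially small, $O(m^{-c})$, and the penalty $\tfrac{\lambda}{2}\Norm{g}_{\mathcal{F}_\gamma}^2=O((\ln m)^{d/2}/m)$, both $\le\tilde O(1/m)$. \emph{Random-feature stage:} I would invoke \cite{Bach2017}'s approximation result for optimized features to replace $g$ by $\hat g\in\mathcal{F}_N$ with $\Norm{g-\hat g}_{L^2(\p_\mathcal{X})}=O(\sqrt{\mu})$ and $\Norm{\hat g}_{\mathcal{F}_N}=O(\Norm{g}_{\mathcal{F}_\gamma})$ using $N=O(d(\mu)\ln d(\mu))$ features; since the hinge loss is $1$-Lipschitz, the $L^2$ bound transfers directly to the risk. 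Taking $\mu\asymp\lambda=1/m$ and bounding the degrees of freedom $d(\mu)$ of the Gaussian operator at bandwidth $\gamma$ yields the stated $N=\tilde O(\ln^{2d}m)$.

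The main obstacle I anticipate is the spectral bookkeeping in this last step. Since the optimal bandwidth $\gamma=\tau/\sqrt{\ln m}$ shrinks with $m$, the Gaussian integral operator $\Sigma_\phi$, and hence its sub-exponential decay constants and its degrees of freedom $d(\mu)$, changes with $m$, so I cannot simply quote a fixed-kernel bound. I would need to track the joint dependence of $d(\mu)$ on $\mu\asymp 1/m$, on $\gamma$, on the radius $\rho$, and on $d$, and verify it is $\tilde O(\ln^{2d}m)$; the exponent $2d$ (rather than $d$) reflects the extra $(\rho/\gamma)^d=\tilde O((\ln m)^{d/2})$ factor from the shrinking bandwidth. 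The delicate point is that $\gamma$ must be small enough for the separation to render the risk gap negligible yet large enough to keep both $\Norm{g}_{\mathcal{F}_\gamma}^2$ and $N$ polylogarithmic; confirming that the single choice $\gamma=\tau/\sqrt{\ln m}$ simultaneously balances all four contributions — risk gap, regularization penalty, sample error, and feature count — is the crux of the proof.
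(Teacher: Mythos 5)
Your proposal is correct and follows essentially the same route as the paper's proof: the same fast-rate oracle inequality driven by the local Rademacher bound $O(\sqrt{N\log(1/\lambda)})$ and Massart's variance condition (Theorem~\ref{thm:est-error} with Lemmas~\ref{lem:emprad-RFSVM}--\ref{lem:upper-bd-r}), the same Gaussian-smoothing construction of a population approximant with excess hinge risk $O(\exp(-\tau^2/\gamma^2))$ and norm $O(\gamma^{-d/2})$ under the separation condition (Lemma~\ref{lem:app-separation}), and the same invocation of Bach's optimized-feature approximation (Theorem~\ref{thm:app-err}). The spectral bookkeeping you flag as the crux is exactly what the paper supplies via Widom's eigenvalue asymptotics (Lemma~\ref{lem:spectrum}): the Gaussian operator at bandwidth $\gamma$ satisfies $\lambda_i(\Sigma)\le C\gamma B\exp\left(-c_{\tau,\rho,d}\,\gamma^2 i^{2/d}\right)$, so the decay constant scales as $\gamma^2\propto 1/\ln m$ and contributes a factor $(\ln m)^d$ to the degrees of freedom --- that is, $(\rho/\gamma)^{2d}$ rather than the $(\rho/\gamma)^d\sim(\ln m)^{d/2}$ you cite --- which, combined with the $\ln^d(1/\mu)\sim\ln^d m$ factor from Lemma~\ref{lem:dof}, yields the stated $\ln^{2d}m$ feature count.
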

To the best of our knowledge, this is the first theorem on the fast learning rate
of random features method in the unrealizable case. It only
assumes that the data distribution satisfies low noise and separation conditions,
and shows that with an optimized feature distribution, the learning rate of $ \tilde{O}(1/m) $
can be achieved using only $ \ln^{2d+1}(m)\ll m $ features. This justifies the benefit
of using RFSVM in binary classification problems. The assumption of a bounded
data set and a bounded distribution density function can be dropped if we assume
that the probability density function is
upper bounded by $ C\exp(-\gamma^2\Vert x\Vert^2/2) $, which suffices to provide
the sub-exponential decay of spectrum of $ \Sigma_\phi $. But we prefer the simpler
form of the results under current conditions. We speculate that the conclusion of
\thmref{thm:unrealizable} can be generalized to all sub-Gaussian data.

The main drawback of our two theorems is the assumption of an optimized
feature distribution, which is hard to obtain in practice.
Developing a data-dependent feature selection method is therefore an important
problem for future work on RFSVM. \cite{Bach2017} proposed an algorithm to
approximate the optimized feature map from any feature map.
Adapted to our setup, the reweighted feature selection algorithm is described as follows.
\begin{enumerate}
    \item Select $M$ i.i.d. random vectors $\{\omega_i\}_{i=1}^M$ according to
    the distribution $d\nu_{\gamma}$.
    \item Select  $L$ data points $\{x_i\}_{i=1}^{L}$ uniformly from the training set.
    \item Generate the matrix $\Phi$ with columns $\phi_M(x_i)/\sqrt{L}$.
    \item Compute $ \{r_i\}_{i=1}^M $, the diagonal of $\Phi\Phi^{\t}(\Phi\Phi^{\t}+\mu I)^{-1}$.
    \item Resample $N$ features from $\{\omega_i\}_{i=1}^M$ according to the
    probability distribution $p_i=r_i/\sum r_i $.
\end{enumerate}
The theoretical guarantees of this algorithm have not been discussed in the literature.
A result in this direction will be extremely useful for guiding practioners.
However, it is outside the scope of our work. Instead, here we
implement it in our experiment and empirically compare the performance of RFSVM using
this reweighted feature selection method to the performance of RFSVM
without this preprocessing step; see Section~\ref{sec:experiments}.

For the realizable case, if we drop the assumption of optimized feature map, only weak results can be
obtained for the learning rate and the number of features required
(see Appendix~\ref{app:unif} for more details). In particular, we can only show that $ 1/\epsilon^2 $
random features are sufficient to guarantee the learning rate less than $ \epsilon $ when
$ 1/\epsilon^3 $ samples are available. Though not helpful for justifying
the computational benefit of random features method, this result matches the
parallel result for RFKRR in \cite{Rudi2017} and the approximation result in
\cite{Sriperumbudur2015}. We conjecture that this upper bound is also optimal for RFSVM.

\cite{Rudi2017} also compared the performance of RFKRR with Nystrom method, which
is the other popular method to scale kernel ridge regression to large data sets.We do not
find any theoretical guarantees on the fast learning rate of SVM with Nystrom method on classification
problems in the literature, though there are several works on its approximation quality
to the accurate model and its empirical performance
(see \cite{Yang2012,Zhang2012}). The tools used in this paper should also work for
learning rate analysis of SVM using Nystrom method. We leave this analysis to the future.

\section{Experimental Results}
\label{sec:experiments}

In this section we evaluate the performance of RFSVM with the reweighted feature selection
algorithm\footnote{The source code is available at
\url{https://github.com/syitong/randfourier}.}. The sample points shown in Figure~\ref{fig:samples}
are generated from either the inner circle or outer annulus uniformly
with equal probability, where the radius of the inner circle is 0.9, and the radius
of the outer annulus ranges from 1.1 to 2. The points from the inner circle are labeled by -1 with
probability 0.9, while the points from the outer annulus are labeled by 1 with
probability 0.9. In such a simple case, the unit circle describes the Bayes
classifier.

First, we compared the performance of RFSVM with that of KSVM on the training
set with $1000$ samples, over
a large range of regularization parameter ($-7\le\log\lambda\le1$). The bandwidth parameter
$\gamma$ is fixed to be an estimate of the average distance among the training
samples. After training, models are tested on a large testing set ($>10^5$).
For RFSVM, we considered the effect of the number of features by setting $N$ to be
$1,3,5,10$ and $20$, respectively. Moreover, both feature selection methods,
simple random feature selection (labeled by `unif' in the figures), which does not
apply any preprocess on drawing features, and reweighted feature
selection (labeled by `opt' in the figures) are inspected.
For the reweighted method, we set $M=100N$ and $L=0.3m$ to compute the weight of each feature.
Every RFSVM is run 10 times, and the average accuracy and standard deviation are
presented.

The results of KSVM, RFSVMs with 1 and 20 features are shown in
\figureref{fig:N1} and \figureref{fig:N20} respectively
(see the results of other levels of features
in Appendix~\ref{app:figures} in the supplementary material).
The performance of RFSVM is slightly worse than
the KSVM, but improves as the number of features increases. It
also performs better when the reweighted method is applied to generate
features.
\begin{figure}
    \begin{minipage}{0.48\textwidth}
      {\centering
      \includegraphics[width=\linewidth]{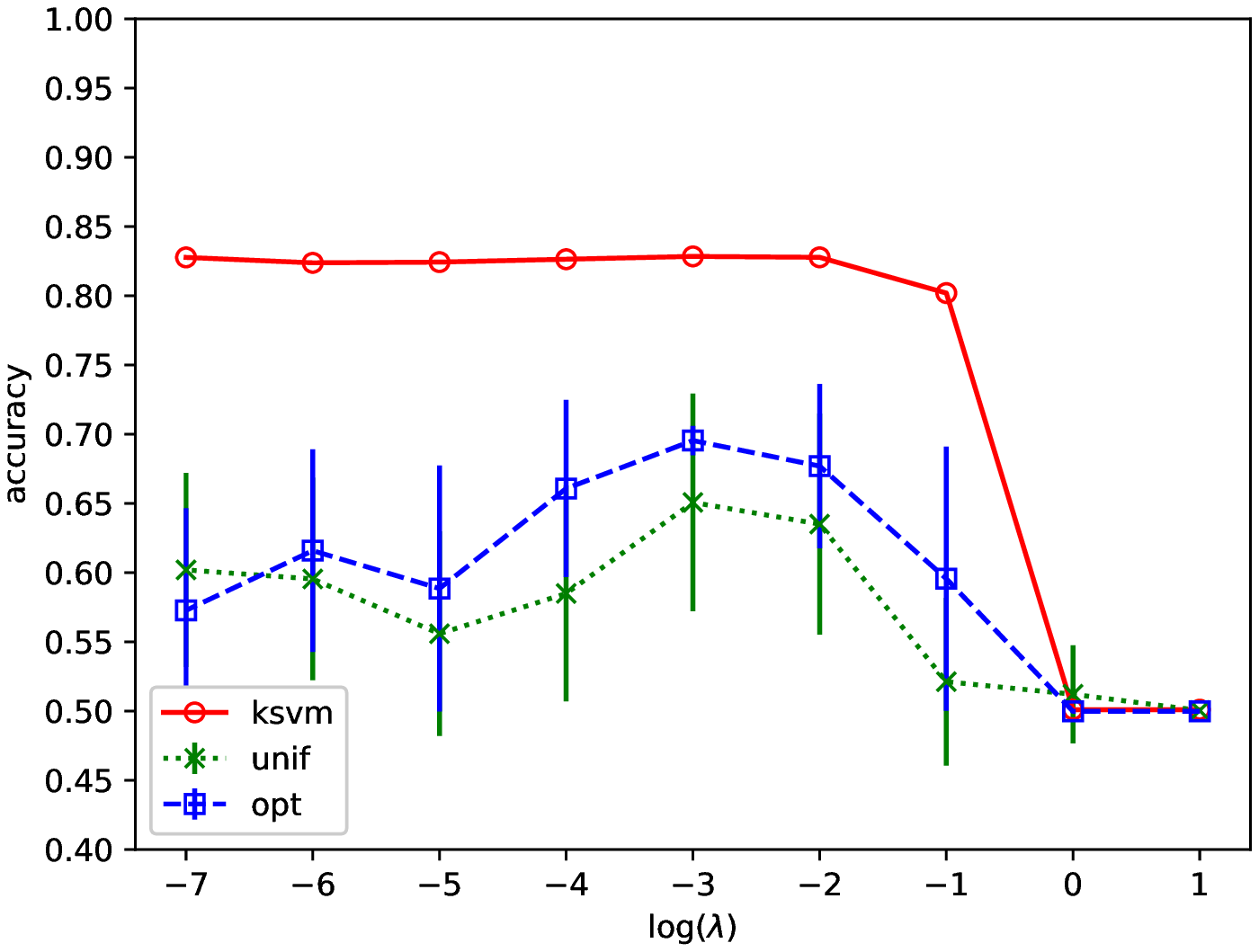}
      \caption{RFSVM with 1 feature. }
      \label{fig:N1}}
    \end{minipage}%
    \hfill{}
    \begin{minipage}{0.48\textwidth}
      {\centering
      \includegraphics[width=\linewidth]{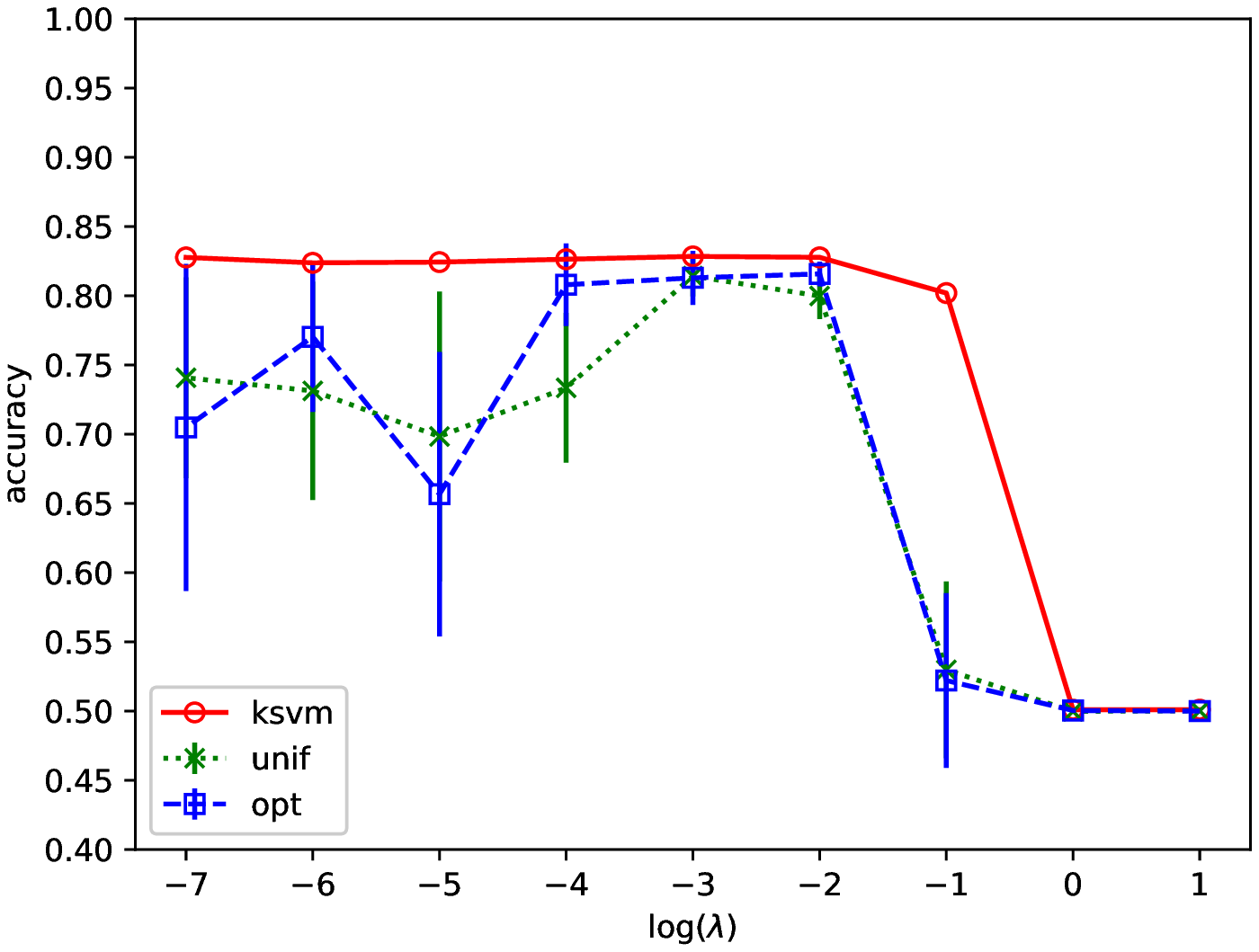}
      \caption{RFSVM with 20 features. }
      \label{fig:N20}}
    \end{minipage}%

    \medskip
    \small
    ``ksvm'' is for KSVM with Gaussian kernel,
    ``unif'' is for RFSVM with direct feature sampling,
    and ``opt'' is for RFSVM with reweighted feature sampling.
    Error bars represent standard deviation over 10 runs.
\end{figure}
\begin{figure}
    \centering
    \begin{minipage}{0.48\textwidth}
      {\centering
      \includegraphics[width=\linewidth]{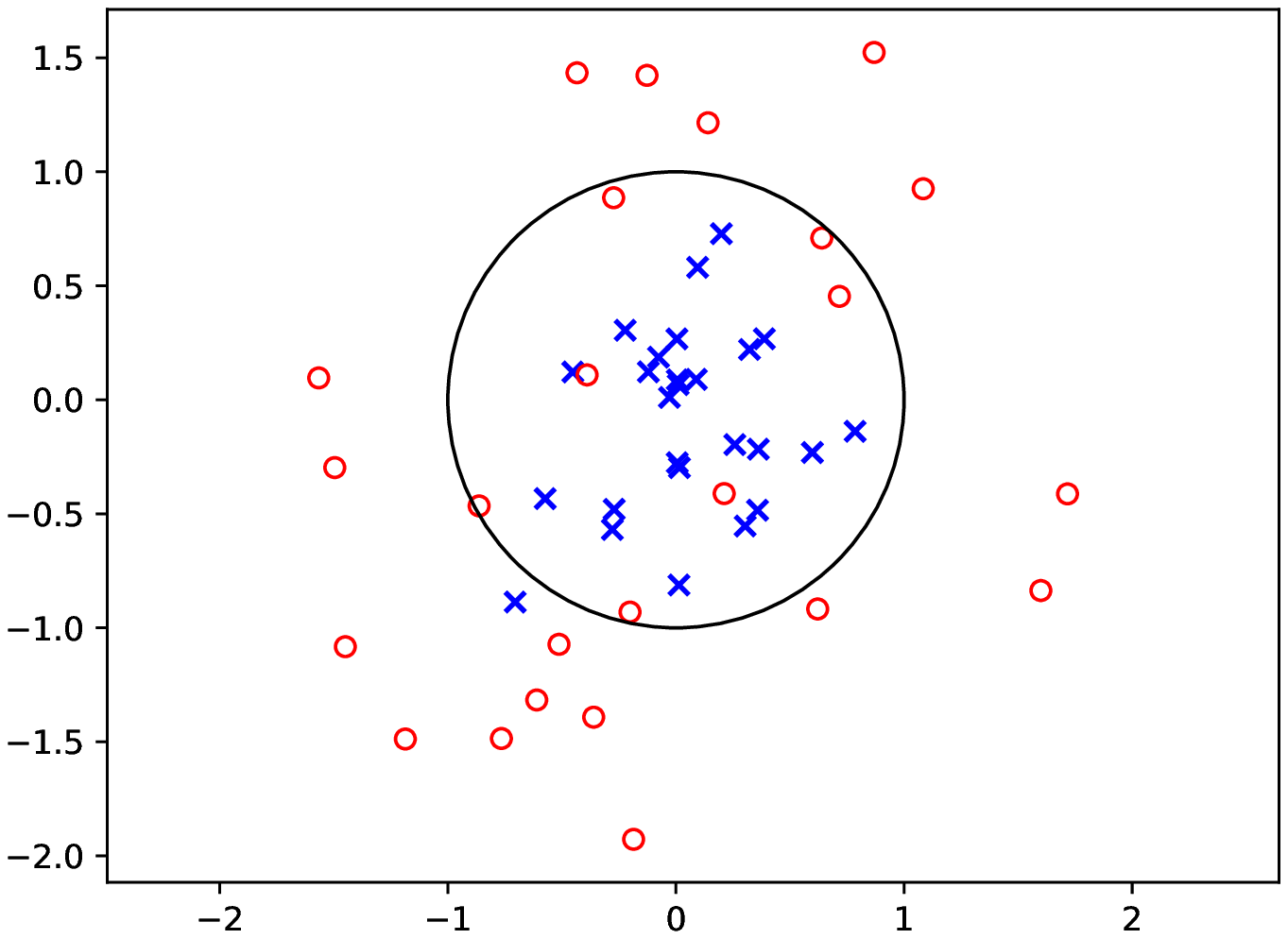}
      \caption{Distribution of Training Samples. }
      \label{fig:samples}}
      \medskip
      \small
      50 points are shown
      in the graph. Blue crosses represent the points labeled by -1,
      and red circles the points labeled by $1$. The unit circle is
      one of the best classifier for these data with 90\% accuracy.
    \end{minipage}%
    \hfill{}
    \begin{minipage}{0.48\textwidth}
      {\centering
      \includegraphics[width=\linewidth]{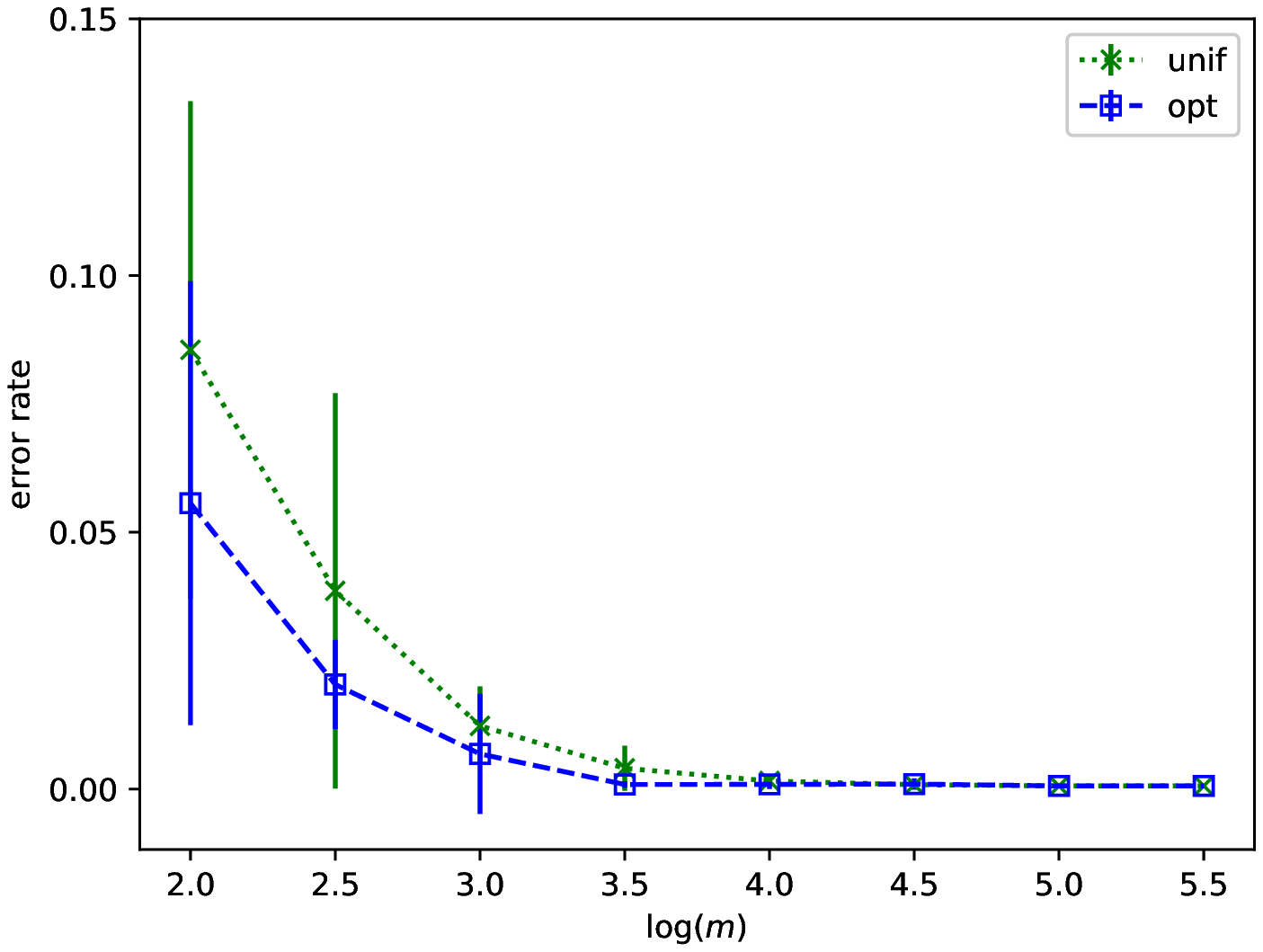}
      \caption{Learning Rate of RFSVMs. }
      \label{fig:LearningRate}}
      \medskip
      \small
      The excess risks of RFSVMs
      with the simple random feature selection (``unif'')
      and the reweighted feature selection (``opt'') are shown for different sample
      sizes. The error rate is the excess risk. The error bars represent the
      standard deviation over 10 runs.
    \end{minipage}
\end{figure}

To further compare the performance of simple feature selection and reweighted
feature selection methods, we plot the learning rate of RFSVM with $O(\ln^{2}(m))$ features
and the best $\lambda$s for each sample size $ m $. KSVM is not included here since it is too slow
on training sets of size larger than $10^4$ in our experiment compared to RFSVM.
The error rate in \figureref{fig:LearningRate} is
the excess risk between learned classifiers and the Bayes classifier. We can
see that the excess risk decays as $m$ increases, and the RFSVM using reweighted
feature selection method outperforms the simple feature selection.

According to Theorem~\ref{thm:unrealizable}, the benefit brought by optimized feature
map, that is, the fast learning rate, will show up when the sample size is
greater than $ O(\exp(d)) $ (see Appendix~\ref{app:unrealizable}). The number
of random features required also depends on $ d $, the dimension of data. For data of
small dimension and large sample size, as in our experiment, it is not a problem.
However, in applications of image recognition, the dimension of the data is
usually very large and it is hard for our theorem to explain the performance
of RFSVM. On the other hand, if we do not pursue the fast learning rate,
the analysis for general feature maps, not necessarily optimized, gives a
learning rate of $ O(m^{-1/3}) $ with $ O(m^{2/3}) $ random features, which
does not depend on the dimension of data (see Appendix~\ref{app:unif}).
Actually, for high dimensional data,
there is barely any improvement in the performance of RFSVM by using reweighted
feature selection method (see Appendix~\ref{app:figures}).
It is important to understand the role of $ d $ to fully
understand the power of random features method.

\section{Conclusion}

Our study proves that the fast learning rate is possible for RFSVM
in both realizable and unrealizable scenarios when the optimized feature map is available.
In particular, the number of features required is far less than the sample size, which
implies considerably faster training and testing using the random features method.
Moreover, we show in the experiments that
even though we can only approximate the optimized feature distribution
using the reweighted feature selection method, it, indeed, has better performance
than the simple random feature selection. Considering that such a reweighted
method does not rely on the label distribution at all, it will be useful
in learning scenarios where multiple classification problems share the same
features but differ in the class labels.
We believe that a theoretical guarantee of the performance of the reweighted feature
selection method and properly understanding the dependence on the
dimensionality of data are interesting directions for future work.

\subsubsection*{Acknowledgements}
\noindent{AT acknowledges the support of a Sloan Research Fellowship.}

\noindent{ACG acknowledges the support of a Simons Foundation Fellowship.}
\bibliographystyle{plainnat}
\bibliography{mathref}

\begin{thebibliography}{20}
\providecommand{\natexlab}[1]{#1}
\providecommand{\url}[1]{\texttt{#1}}
\expandafter\ifx\csname urlstyle\endcsname\relax
  \providecommand{\doi}[1]{doi: #1}\else
  \providecommand{\doi}{doi: \begingroup \urlstyle{rm}\Url}\fi

\bibitem[Bach(2017)]{Bach2017}
Francis Bach.
\newblock On the equivalence between kernel quadrature rules and random feature
  expansions.
\newblock \emph{Journal of Machine Learning Research}, 18\penalty0
  (21):\penalty0 1--38, 2017.

\bibitem[Cortes et~al.(2010)Cortes, Mohri, and Talwalkar]{cortes2010}
Corinna Cortes, Mehryar Mohri, and Ameet Talwalkar.
\newblock On the impact of kernel approximation on learning accuracy.
\newblock \emph{Journal of Machine Learning Research}, 9:\penalty0 113--120,
  2010.
\newblock ISSN 1532-4435.

\bibitem[Cucker and Smale(2002)]{Cucker2002}
Felipe Cucker and Steve Smale.
\newblock On the mathematical foundations of learning.
\newblock \emph{Bulletin of the American Mathematical Society}, 39:\penalty0
  1--49, 2002.

\bibitem[Dai et~al.(2014)Dai, Xie, He, Liang, Raj, Balcan, and Song]{Dai2014}
Bo~Dai, Bo~Xie, Niao He, Yingyu Liang, Anant Raj, Maria-Florina~F Balcan, and
  Le~Song.
\newblock Scalable kernel methods via doubly stochastic gradients.
\newblock In Z.~Ghahramani, M.~Welling, C.~Cortes, N.~D. Lawrence, and K.~Q.
  Weinberger, editors, \emph{Advances in Neural Information Processing Systems
  27}, pages 3041--3049. Curran Associates, Inc., 2014.

\bibitem[Eric et~al.(2008)Eric, Bach, and Harchaoui]{Harchaoui2008}
Moulines Eric, Francis~R Bach, and Za{\"\i}d Harchaoui.
\newblock Testing for homogeneity with kernel {F}isher discriminant analysis.
\newblock In \emph{Advances in Neural Information Processing Systems}, pages
  609--616, 2008.

\bibitem[Hsieh et~al.(2008)Hsieh, Chang, Lin, Keerthi, and
  Sundararajan]{Hsieh2008}
Cho-Jui Hsieh, Kai-Wei Chang, Chih-Jen Lin, S.~Sathiya Keerthi, and
  S.~Sundararajan.
\newblock A dual coordinate descent method for large-scale linear svm.
\newblock In \emph{Proceedings of the 25th International Conference on Machine
  Learning}, ICML '08, pages 408--415, New York, NY, USA, 2008. ACM.
\newblock ISBN 978-1-60558-205-4.
\newblock \doi{10.1145/1390156.1390208}.

\bibitem[Huang et~al.(2014)Huang, Avron, Sainath, Sindhwani, and
  Ramabhadran]{Huang2014}
P.~S. Huang, H.~Avron, T.~N. Sainath, V.~Sindhwani, and B.~Ramabhadran.
\newblock Kernel methods match deep neural networks on timit.
\newblock In \emph{2014 IEEE International Conference on Acoustics, Speech and
  Signal Processing (ICASSP)}, pages 205--209, May 2014.
\newblock \doi{10.1109/ICASSP.2014.6853587}.

\bibitem[Koltchinskii et~al.(2011)Koltchinskii, service), and d'\'{E}t\'{e} de
  Probabilit\'{e}s~de Saint-Flour]{Koltchinskii2011}
Vladimir. Koltchinskii, SpringerLink~(Online service), and \'{E}cole
  d'\'{E}t\'{e} de Probabilit\'{e}s~de Saint-Flour.
\newblock \emph{Oracle Inequalities in Empirical Risk Minimization and Sparse
  Recovery Problems \'{E}cole d'\'{E}t\'{e} de Probabilit\'{e}s de Saint-Flour
  XXXVIII-2008}.
\newblock Lecture Notes in Mathematics,0075-8434 ;2033. Springer-Verlag Berlin
  Heidelberg, Berlin, Heidelberg, 2011.

\bibitem[Lax(2002)]{Lax2002}
P.D. Lax.
\newblock \emph{Functional analysis}.
\newblock Pure and applied mathematics. Wiley, 2002.
\newblock ISBN 9780471556046.
\newblock URL \url{https://books.google.com/books?id=-jbvAAAAMAAJ}.

\bibitem[Rahimi and Recht(2008)]{Rahimi2008}
Ali Rahimi and Benjamin Recht.
\newblock Random features for large-scale kernel machines.
\newblock In J.~C. Platt, D.~Koller, Y.~Singer, and S.~T. Roweis, editors,
  \emph{Advances in Neural Information Processing Systems 20}, pages
  1177--1184. Curran Associates, Inc., 2008.

\bibitem[Rahimi and Recht(2009)]{Rahimi2009}
Ali Rahimi and Benjamin Recht.
\newblock Weighted sums of random kitchen sinks: Replacing minimization with
  randomization in learning.
\newblock In D.~Koller, D.~Schuurmans, Y.~Bengio, and L.~Bottou, editors,
  \emph{Advances in Neural Information Processing Systems 21}, pages
  1313--1320. Curran Associates, Inc., 2009.

\bibitem[Rudi and Rosasco(2017)]{Rudi2017}
Alessandro Rudi and Lorenzo Rosasco.
\newblock Generalization properties of learning with random features.
\newblock In \emph{Advances in Neural Information Processing Systems}, pages
  3218--3228, 2017.

\bibitem[Scovel et~al.(2010)Scovel, Hush, Steinwart, and Theiler]{Scovel2010}
Clint Scovel, Don Hush, Ingo Steinwart, and James Theiler.
\newblock Radial kernels and their reproducing kernel hilbert spaces.
\newblock \emph{Journal of Complexity}, 26\penalty0 (6):\penalty0 641--660,
  2010.

\bibitem[Shalev-Shwartz et~al.(2011)Shalev-Shwartz, Singer, Srebro, and
  Cotter]{Shalev-Shwartz2011}
Shai Shalev-Shwartz, Yoram Singer, Nathan Srebro, and Andrew Cotter.
\newblock Pegasos: primal estimated sub-gradient solver for svm.
\newblock \emph{Mathematical Programming}, 127\penalty0 (1):\penalty0 3--30,
  2011.
\newblock ISSN 1436-4646.
\newblock \doi{10.1007/s10107-010-0420-4}.

\bibitem[Sriperumbudur and Szabo(2015)]{Sriperumbudur2015}
Bharath Sriperumbudur and Zoltan Szabo.
\newblock Optimal rates for random fourier features.
\newblock In C.~Cortes, N.~D. Lawrence, D.~D. Lee, M.~Sugiyama, and R.~Garnett,
  editors, \emph{Advances in Neural Information Processing Systems 28}, pages
  1144--1152. Curran Associates, Inc., 2015.
\newblock URL
  \url{http://papers.nips.cc/paper/5740-optimal-rates-for-random-fourier-features.pdf}.

\bibitem[Steinwart and Christmann(2008)]{Steinwart2008}
I.~Steinwart and A.~Christmann.
\newblock \emph{Support Vector Machines}.
\newblock Information Science and Statistics. Springer New York, 2008.
\newblock ISBN 9780387772424.

\bibitem[Sutherland and Schneider(2015)]{Sutherland2015}
Dougal~J. Sutherland and Jeff~G. Schneider.
\newblock On the error of random fourier features.
\newblock \emph{CoRR}, abs/1506.02785, 2015.

\bibitem[Widom(1963)]{Widom1963}
Harold Widom.
\newblock Asymptotic behavior of the eigenvalues of certain integral equations.
\newblock \emph{Transactions of the American Mathematical Society},
  109\penalty0 (2):\penalty0 278--295, 1963.
\newblock ISSN 00029947.
\newblock URL \url{http://www.jstor.org/stable/1993907}.

\bibitem[Yang et~al.(2012)Yang, Li, Mahdavi, Jin, and Zhou]{Yang2012}
Tianbao Yang, Yu-feng Li, Mehrdad Mahdavi, Rong Jin, and Zhi-Hua Zhou.
\newblock Nystr\"{o}m method vs random fourier features: A theoretical and
  empirical comparison.
\newblock In F.~Pereira, C.~J.~C. Burges, L.~Bottou, and K.~Q. Weinberger,
  editors, \emph{Advances in Neural Information Processing Systems 25}, pages
  476--484. Curran Associates, Inc., 2012.

\bibitem[Zhang et~al.(2012)Zhang, Lan, Wang, and Moerchen]{Zhang2012}
Kai Zhang, Liang Lan, Zhuang Wang, and Fabian Moerchen.
\newblock Scaling up kernel svm on limited resources: A low-rank linearization
  approach.
\newblock In Neil~D. Lawrence and Mark Girolami, editors, \emph{Proceedings of
  the Fifteenth International Conference on Artificial Intelligence and
  Statistics}, volume~22 of \emph{Proceedings of Machine Learning Research},
  pages 1425--1434, La Palma, Canary Islands, 21--23 Apr 2012. PMLR.
\newblock URL \url{http://proceedings.mlr.press/v22/zhang12d.html}.

\end{thebibliography}
\newpage{}
\setcounter{page}{1}
\appendix

\section{Examples of Optimized Feature Maps}
\label{app:opt-feature}

Assume that a feature map $ \phi:\mathcal(X)\to L^2(\omega,\Omega,\nu) $ satisfies
that $ \phi(\omega;x) $ is bounded for all $ \omega $ and $ x $. We can always
convert it to an optimized feature map using the method proposed by \cite{Bach2017}.
We rephrase it using our notation as follows.

Define
\begin{equation}
  p(\omega) = \frac{\Vert (\Sigma + \mu I)^{-1/2}\phi(\cdot;\omega)\Vert^2_{L^2(\mathcal{X},\p)}}
  {\int_{\Omega}\Vert (\Sigma + \mu I)^{-1/2}\phi(\cdot;\omega)\Vert^2_{L^2(\mathcal{X},\p)}~\d\nu(\omega)}\,.
\end{equation}
Since $ \phi $ is bounded, its $ L^2 $ norm is finite. The function $ p $
defined above is a probability density function with respect to $ \nu $. 
Then the new feature map is given by
$ \tilde{\phi}(\omega;x) = \phi(\omega;x) / \sqrt{p(\omega)} $ together with the
measure $ p(\omega)\mathrm{d}\nu(\omega) $. With $ \tilde{\phi} $, we have
\begin{align}
  \sup_{\omega\in\Omega} \left\Vert (\Sigma + \mu I)^{-1/2}\tilde{\phi}(\cdot;\omega)\right\Vert^2
  & = \sup_{\omega\in\Omega} \frac{\left\Vert(\Sigma + \mu I)^{-1/2}\phi(\cdot;\omega) \right\Vert^2}
  {p(\omega)} \\
  & = \int_{\Omega}\Vert (\Sigma + \mu I)^{-1/2}
  \phi(\cdot;\omega)\Vert^2_{L^2(\mathcal{X},\p)}~\d\nu(\omega) \\
  & = \mathrm{tr}(\Sigma(\Sigma+\mu I)^{-1})\,.
\end{align}

When the feature map is constructed mapping into $ L^2(\mathcal{X},\p) $ as
described in Section~\ref{sec:prelim}, it is optimized. Indeed, we can compute
\begin{align}
  \sup_{\omega\in\mathcal{X}} \left\Vert (\Sigma + \mu I)^{-1/2}\phi(\cdot;\omega)\right\Vert^2
  & = \sup_{\omega\in\mathcal{X}} \left\Vert \sum_{i=1}^\infty \frac{\sqrt{\lambda_i}}
  {\sqrt{\lambda_i + \mu}}e_i(\cdot)\right\Vert^2 \\
  & = \sum_{i=1}^\infty \frac{\lambda_i}{\lambda_i + \mu}\,.
\end{align}
As an example for this type of feature map, we can consider $ \{ e_i \} $ to be
the Walsh system, which is an orthonormal basis for $ L^2([0,1]) $. Any Bayes
classifier with finitely many discontinuities and discontinuous only at
dyadic, namely points expressable by finite bits, points, will be a finite
linear combination of Walsh basis. This guarantees that the assumptions in
\thmref{thm:realizable} can be satisfied. Our first experiment also make use
of this construction.

The construction above is inspired by the use of spline kernel in \cite{Rudi2017}.
However, our situation is more complicated since the target function, Bayes
classifier, is discontinuous. While the functions in the RKHS generated by the
spline kernel must be continuous (\cite{Cucker2002}). Though we can construct
Bayes classifier using the Walsh basis, we have yet to understand the variety of
possible Bayes classifiers in such a space.

\section{Local Rademacher Complexity of RFSVM}
\label{sec:proof1}

Before the proofs, we first briefly summarize the use of each lemmas and theorems.
Theorem~\ref{thm:est-error} and \ref{thm:app-err} are two fundamental external results for our proof.
Lemma~\ref{lem:app-separation} and \ref{lem:spectrum} refine results that 
appeared in previous works, so that we can apply them to our case.
Lemma~\ref{lem:emprad-RFSVM}, \ref{lem:rad-RFSVM} and \ref{lem:upper-bd-r} are the key results to establish
fast rate for RFSVM, parallel to Steinwarts' work for KSVM. All other smaller and simpler lemmas
included in the appendices are for the purposes of clarity and completeness.
The proofs are not hard but quite technical.

First, both of our theorems are consequences of the following fundamental theorem.
In the theorem, $ \ell^1 $ is the hinge loss clipped at 1, and $ R^1_{\mathbb{P},\lambda} $ 
is the expected regularized risk of $ \ell^1 $.
\begin{theorem}
    \label{thm:est-error}(Theorem 7.20 in \cite{Steinwart2008})
    For a RKHS $ \mathcal{F} $, denote
    $ \inf_{f\in\mathcal{F}} R^1_{\p,\lambda}(f) - R^* $ by $ r^* $.
    For $ r > r^* $, consider the following function classes
    \[
      \mathcal{F}_{r}:=\{f\in\mathcal{F}\mid R_{\p,\lambda}^{1}(f)-R^{*}\le r\}
    \]
    and
    \[
      \mathcal{H}_{r}:=\{\ell^{1}\circ f-\ell^{1}\circ f_{\p}^{*}
      \mid f\in\mathcal{F}_{r}\}\,.
    \]
    Assume that there exists $ V \ge 1 $ such that for any $ f \in \mathcal{F} $,
    \[
      \e_{\p}(\ell^{1}\circ f - \ell^{1}\circ f_{\p}^{*})^{2}
      \le V(R_{\p}^{1}(f)-R^{*})\,.
    \]
    If there is a function $\varphi_{m}:[0,\infty)\to[0,\infty)$
    such that $\varphi_{m}(4r)\le2\varphi_{m}(r)$
    and $\mathfrak{R}_{m}(\mathcal{H}_{r})\le\varphi_{m}(r)$
    for all $r\ge r^{*}$,
    Then, for any $\delta\in(0,1]$,
    $ f_0 \in\mathcal{F} $ with
    $\Vert\ell^{\mathrm{hinge}}\circ f_{0}\Vert_{\infty}\le B_{0}$, and
    \[
      r>\max\left\{ 30\varphi_{m}(r),
      \frac{72V\ln(1/\delta)}{m},
      \frac{5B_{0}\ln(1/\delta)}{m},r^{*}\right\}\,,
    \]
    we have
    \[
      R_{\p,\lambda}^{1}(f_{m,N,\lambda})-R^{*}
      \le 6\left(R_{\p,\lambda}^h(f_{0})-R^{*}\right) + 3r
    \]
    with probability greater than $1-3\delta$.
\end{theorem}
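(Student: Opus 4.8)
The plan is to prove this as a standard oracle inequality for the regularized empirical risk minimizer $f_{m,N,\lambda}$ through a localized analysis of the empirical process. The starting point is the optimization inequality: since $f_{m,N,\lambda}$ minimizes the empirical regularized hinge risk, we have $R^h_{m,\lambda}(f_{m,N,\lambda}) \le R^h_{m,\lambda}(f_0)$ for the reference function $f_0$. Passing to the clipped loss $\ell^1$ — which never increases the risk of the learned predictor, while $R^1 \le R^h$ keeps $f_0$'s hinge risk as an upper bound — and subtracting the Bayes risk $R^*$, the problem reduces to controlling the centered empirical process $R^1_\p(f) - R^1_m(f)$ (equivalently, $(\e_\p - \e_m)(\ell^1\circ f - \ell^1\circ f^*_\p)$, where $\e_m$ denotes the average over the $m$ samples) uniformly over the localized class $\mathcal{H}_r$.

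First I would fix a radius $r > r^*$ and bound $\sup_{h\in\mathcal{H}_r}(\e_\p - \e_m)h$ by Talagrand's (Bousquet's) concentration inequality. The two data-dependent inputs are the boundedness of $h$ — since the clipped loss lies in $[0,1]$, $\Norm{h}_\infty$ is controlled — and, crucially, the variance, which the assumed variance condition $\e_\p h^2 \le V(R^1_\p(f) - R^*) \le Vr$ bounds linearly in $r$. Talagrand's inequality then yields, with probability at least $1-\delta$, a bound of the form $2\,\e\sup_{h\in\mathcal{H}_r}(\e_\p - \e_m)h + \sqrt{2Vr\ln(1/\delta)/m} + cB_0\ln(1/\delta)/m$. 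The expected supremum is handled by symmetrization followed by the contraction principle (the hinge loss is $1$-Lipschitz), reducing it to the Rademacher complexity $\mathfrak{R}_m(\mathcal{H}_r) \le \varphi_m(r)$.

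The difficulty is that the level $r$ in which $f_{m,N,\lambda}$ actually lands is random and not known in advance, so a single application of the fixed-$r$ bound is not enough. The key step, and the main obstacle, is the peeling argument: stratify the possible excess-risk values into geometric shells $[4^j r, 4^{j+1}r)$ and apply the fixed-$r$ estimate on each shell with a union bound. Here the sub-root-type assumption $\varphi_m(4r) \le 2\varphi_m(r)$ is exactly what prevents the complexity term from blowing up across scales, so that the per-shell contributions form a convergent geometric series still controlled by $\varphi_m(r)$. Tracking the numerical constants through Talagrand's inequality together with this union bound is where the explicit factors ($30$, $72$, $5$) in the hypotheses on $r$ originate.

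Finally, I would assemble the pieces into a self-bounding inequality for the excess regularized risk of $f_{m,N,\lambda}$. Under the stated lower bounds on $r$ — namely that $r$ exceeds $30\varphi_m(r)$, the variance term $72V\ln(1/\delta)/m$, the boundedness term $5B_0\ln(1/\delta)/m$, and the floor $r^*$ — each error contribution is dominated by a fixed fraction of $r$, so the self-bounding inequality can be solved to give $R^1_{\p,\lambda}(f_{m,N,\lambda}) - R^* \le 6(R^h_{\p,\lambda}(f_0) - R^*) + 3r$ with probability at least $1-3\delta$, the three failure events coming from the separate concentration and union-bound steps. The factor $6$ absorbs the passage from the clipped comparison back to the hinge risk of $f_0$ together with the constants from the self-bounding step.
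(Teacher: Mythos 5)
The paper offers no proof of this statement: it is imported verbatim as an external result (Theorem 7.20 of \cite{Steinwart2008}), explicitly flagged in the appendix as one of the two ``fundamental external results'' on which the paper's own analysis rests. Your outline essentially reconstructs the proof given in that reference --- the same ingredients in the same order: clipping, the variance condition feeding a Talagrand/Bousquet concentration bound on the localized class, reduction to the Rademacher complexity, peeling over geometric shells where $\varphi_m(4r)\le 2\varphi_m(r)$ keeps the per-shell contributions summable, and a final self-bounding inequality --- so it is the standard argument rather than a genuinely different route. Two small inaccuracies you should fix if writing it out in full: the clipped (ramp) loss used here is bounded by $2$, not contained in $[0,1]$; and the $B_0$ term does not emerge from the supremum bound but from a separate Bernstein-type concentration step applied to the fixed reference function $f_0$ (one of the three $\delta$-failure events), a step your sketch does not isolate.
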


To establish the fast rate of RFSVM using the theorem above, we must
understand the local Rademacher complexity of RFSVM: that is, find a formula
for $ \varphi_m(r) $. $ B_0,r^* $ and $ f_0 $ are only related with the
approximation error, and we leave the discussion of them to next sections.
The variance condition~\eqref{eq:variance} is satisfied under Assumption~\ref{massart}.
With this variance condition, we can upper bound the Rademacher complexity of RFSVM in
terms of number of features and regularization parameter.
It is particularly important to have $ 1/\lambda $ inside the logarithm function.

First, we will need the summation version of Dudley's inequality
using entropy number defined below, instead of covering number.
\begin{definition}
    For a semi-normed space $ (E,\Vert\cdot\Vert) $,
    we define its (dyadic) entropy number by
    \[
      e_n(E,\Vert\cdot\Vert)
      := \inf\left\{\varepsilon>0:\exists s_1,\ldots,s_{2^{n-1}} \in B^1
      \text{ s.t. } B^1\subset\bigcup_{i=1}^{2^{n-1}}B(s_i,\varepsilon)\right\}\,,
    \]
    where $B^1$ is the unit ball in $E$ and $B(a,r)$ is the ball with center at $a$
    and radius $r$.
\end{definition}
To take off the loss function from the hypothesis class, we have
the following lemma. $ \Vert\cdot\Vert_{L_{2}(D)} $ is the semi-norm defined by
$ \Vert\cdot\Vert_{L_{2}(D)} := (\frac{1}{m}\sum_i f^2(x_i))^{1/2} $.
\begin{lemma}
    $e_{i}(\mathcal{H}_{r},\Vert\cdot\Vert_{L_{2}(D)})
    \le e_{i}(\mathcal{F}_{r},\Vert\cdot\Vert_{L_{2}(D)})$
\end{lemma}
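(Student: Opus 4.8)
The plan is to realize $\mathcal{H}_r$ as the image of $\mathcal{F}_r$ under the map $\Psi(f) := \ell^1\circ f - \ell^1\circ f^*_\p$ and to show that this map is a $1$-Lipschitz surjection with respect to the empirical semi-norm $\Vert\cdot\Vert_{L_2(D)}$. Since entropy numbers can only shrink under a $1$-Lipschitz map, any $\varepsilon$-cover of $\mathcal{F}_r$ will push forward to an $\varepsilon$-cover of $\mathcal{H}_r$ of the same cardinality, which yields the claimed inequality.

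First I would record the contraction property of the clipped hinge loss. For each fixed label $y\in\{-1,1\}$, the map $t\mapsto\ell^1(y,t)=\min\{1,\max\{0,1-yt\}\}$ is the composition of the affine map $t\mapsto 1-yt$, which is exactly $1$-Lipschitz because $|y|=1$, with the $1$-Lipschitz clipping $s\mapsto\min\{1,\max\{0,s\}\}$; hence it is $1$-Lipschitz in $t$. Consequently, for any two hypotheses $f,g$ and any sample point $(x_i,y_i)$ the fixed term $\ell^1(y_i,f^*_\p(x_i))$ cancels and
\[
  |(\Psi(f))(x_i,y_i) - (\Psi(g))(x_i,y_i)|
  = |\ell^1(y_i,f(x_i)) - \ell^1(y_i,g(x_i))|
  \le |f(x_i)-g(x_i)|\,.
\]

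Next I would push forward an optimal cover. Fix $\varepsilon > e_i(\mathcal{F}_r,\Vert\cdot\Vert_{L_2(D)})$ and let $f_1,\dots,f_{2^{i-1}}\in\mathcal{F}_r$ be centers whose $\varepsilon$-balls cover $\mathcal{F}_r$. Given any $h=\Psi(f)\in\mathcal{H}_r$ with $f\in\mathcal{F}_r$, choose $j$ with $\Vert f - f_j\Vert_{L_2(D)}\le\varepsilon$. Squaring the pointwise inequality above and averaging over the sample gives
\[
  \Vert \Psi(f)-\Psi(f_j)\Vert_{L_2(D)}^2
  = \frac{1}{m}\sum_{i=1}^m \big(\ell^1(y_i,f(x_i))-\ell^1(y_i,f_j(x_i))\big)^2
  \le \frac{1}{m}\sum_{i=1}^m (f(x_i)-f_j(x_i))^2
  = \Vert f-f_j\Vert_{L_2(D)}^2\le\varepsilon^2\,.
\]
Thus the $2^{i-1}$ balls centered at the $\Psi(f_j)\in\mathcal{H}_r$ of radius $\varepsilon$ cover $\mathcal{H}_r$, so $e_i(\mathcal{H}_r,\Vert\cdot\Vert_{L_2(D)})\le\varepsilon$; letting $\varepsilon\downarrow e_i(\mathcal{F}_r,\Vert\cdot\Vert_{L_2(D)})$ finishes the proof.

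There is no serious obstacle here: the argument is the standard Lipschitz-composition principle for entropy numbers. The only points requiring care are that $|y_i|=1$ makes the inner affine map exactly $1$-Lipschitz so that no extra constant is introduced, and that subtracting the fixed Bayes term $\ell^1\circ f^*_\p$ cancels and therefore does not inflate the covering radius. As a minor bookkeeping remark, the pushed-forward centers $\Psi(f_j)$ lie in $\mathcal{H}_r$ whenever the $f_j$ are taken in $\mathcal{F}_r$, so the resulting cover uses legitimate centers.
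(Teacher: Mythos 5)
Your proposal is correct and follows essentially the same route as the paper: both push an optimal $\varepsilon$-cover of $\mathcal{F}_r$ forward through the map $f\mapsto\ell^1\circ f-\ell^1\circ f^*_\p$, using the $1$-Lipschitzness of the clipped hinge loss (and the cancellation of the fixed Bayes term) to conclude that the images form an $\varepsilon$-cover of $\mathcal{H}_r$ of the same cardinality. Your write-up is in fact slightly more careful than the paper's, spelling out the Lipschitz factorization and the limiting argument $\varepsilon\downarrow e_i(\mathcal{F}_r,\Vert\cdot\Vert_{L_2(D)})$ explicitly.
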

\begin{proof}
    Assume that $ T $ is an $\epsilon$-covering over $\mathcal{F}_{r}$ with
    $ |T| = 2^{i} $. By definition $ \epsilon \ge e_{i}(\mathcal{F}_{r},
    \Vert\cdot\Vert_{L_{2}(D)}) $.
    Then $ T' = \ell^1 \circ T - \ell^1 \circ f^*_{\p} $ is a covering over
    $ \mathcal{H}_{r} $. For any $ f $ and $ g $ in $ \mathcal{F}_{r} $,
    \begin{equation*}
      \Norm{\ell^{1} \circ f - \ell^{1} \circ g}_{L_{2}(D)}
      \le 1 \cdot \Norm{f - g}_{L_{2}(D)}\,,
    \end{equation*}
    because $ \ell^{1} $ is $ 1 $-Lipschitz. And hence the radius of the
    image of an $ \epsilon $-ball under $ \ell^{1} $ is less than $ \epsilon $.
    Therefore $ \ell^{1}\circ T-\ell^{1}\circ f_{\p}^{*} $ is an
    $ \epsilon $-covering over $ \mathcal{H}_{r} $ with cardinatily $ 2^{i} $
    and $ \epsilon \le e_{i}(\mathcal{F}_{r},\Vert\cdot\Vert_{L_{2}(D)}) $.
    By taking infimum over the radius of all such $ T $ and $ T' $,
    the statement is proved.
\end{proof}

Now we need to give an upper bound for the entropy number of $\mathcal{F}_{r}$
with semi-norm $\Vert\cdot\Vert_{L_{2}(D)}$ using a volumetric estimate.
\begin{lemma}
    $e_{i}(\mathcal{F}_{r},\Vert\cdot\Vert_{L_{2}(D)})\le3(2r/\lambda)^{1/2}2^{-i/2N}$.\end{lemma}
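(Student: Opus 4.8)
The plan is to reduce the entropy number in $\Vert\cdot\Vert_{L_2(D)}$ to the entropy number of a Euclidean ball in the coefficient space of the random features, and then to invoke a standard volumetric estimate. First I would observe that $\mathcal{F}_r$ lives inside a norm ball of the RKHS. Since $R^1_{\p,\lambda}(f)=R^1_\p(f)+\tfrac{\lambda}{2}\Norm{f}_{\mathcal{F}}^2$ and $R^1_\p(f)\ge R^*$ for every $f$, any $f\in\mathcal{F}_r$ satisfies $\tfrac{\lambda}{2}\Norm{f}_{\mathcal{F}}^2\le R^1_{\p,\lambda}(f)-R^*\le r$, hence $\Norm{f}_{\mathcal{F}}\le\sqrt{2r/\lambda}$. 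Thus $\mathcal{F}_r$ is contained in the RKHS ball of radius $\rho:=\sqrt{2r/\lambda}$, and it suffices to cover this ball in the $\Vert\cdot\Vert_{L_2(D)}$ seminorm.

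Next I would pass to coordinates. Every $f\in\mathcal{F}_N$ can be written as $f(x)=\ip{w}{\phi_N(x)}$ with $w\in\r^{2N}$, and the minimal-norm representative satisfies $\Norm{f}_{\mathcal{F}_N}=\Norm{w}_2$, so the RKHS ball of radius $\rho$ corresponds to $\{w:\Norm{w}_2\le\rho\}$. The key step is to show that the linear map $w\mapsto f$ is norm non-increasing from $(\r^{2N},\Norm{\cdot}_2)$ into $(\mathcal{F}_N,\Norm{\cdot}_{L_2(D)})$. Indeed,
\begin{equation*}
  \Norm{f}_{L_2(D)}^2=\frac{1}{m}\sum_{i=1}^m\ip{w}{\phi_N(x_i)}^2
  =\ip{w}{\Big(\tfrac{1}{m}\sum_i\phi_N(x_i)\phi_N(x_i)^{\t}\Big)w}
  \le\Norm{w}_2^2,
\end{equation*}
because the (normalized) random feature vector satisfies $\Norm{\phi_N(x)}_2\le 1$, so the empirical second-moment matrix $\tfrac{1}{m}\sum_i\phi_N(x_i)\phi_N(x_i)^{\t}$ has operator norm at most $1$. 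Consequently an $\ep$-cover of the Euclidean ball $\{w:\Norm{w}_2\le\rho\}$ maps to an $\ep$-cover of $\mathcal{F}_r$ in $\Norm{\cdot}_{L_2(D)}$, which gives $e_i(\mathcal{F}_r,\Norm{\cdot}_{L_2(D)})\le\rho\,e_i(B_2^{2N},\Norm{\cdot}_2)$, where $B_2^{2N}$ is the Euclidean unit ball in dimension $2N$.

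Finally I would apply the volumetric estimate for $B_2^{2N}$: a radius-$\ep$ net of the Euclidean unit ball in $\r^{2N}$ has cardinality at most $(3/\ep)^{2N}$, so requiring $2^{i-1}\ge(3/\ep)^{2N}$ and solving for $\ep$ yields $e_i(B_2^{2N},\Norm{\cdot}_2)\le 3\cdot 2^{-i/2N}$. Multiplying by $\rho=\sqrt{2r/\lambda}$ gives the claimed bound $e_i(\mathcal{F}_r,\Norm{\cdot}_{L_2(D)})\le 3(2r/\lambda)^{1/2}2^{-i/2N}$. The only genuinely delicate points are the identification of the RKHS norm with the Euclidean norm of the minimal coefficient vector and the uniform feature bound $\Norm{\phi_N(x)}_2\le 1$, which controls the operator norm of the empirical Gram matrix; everything else is a routine covering-number computation, and the appearance of $2N$ in the exponent is precisely the dimension of the cosine–sine random Fourier feature space.
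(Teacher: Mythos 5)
Your proposal is correct and is essentially the paper's own argument: both deduce $\Vert f\Vert_{\mathcal{F}}\le(2r/\lambda)^{1/2}$ from the regularization term and then reduce the problem to the volumetric entropy estimate for a $2N$-dimensional Euclidean ball of that radius, yielding $3(2r/\lambda)^{1/2}2^{-i/2N}$. The only difference is directional and cosmetic: you cover the coefficient ball in $\r^{2N}$ and push the cover forward through the non-expansive linear map $w\mapsto\ip{w}{\phi_N(\cdot)}$ (non-expansive in $\Vert\cdot\Vert_{L_2(D)}$ because $\Vert\phi_N(x)\Vert_2\le 1$), whereas the paper evaluates at the $m$ sample points to identify $\mathcal{F}_r$ isometrically with a subset of a $2N$-dimensional subspace of $\r^m$ and bounds the image using the reproducing property $|f(x)|\le\Vert f\Vert_{\mathcal{F}}$ --- two mirror-image uses of the same fact $k_N(x,x)\le 1$, leading to the identical final estimate.
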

\begin{proof}
    Since $\mathcal{F}$ consists of functions
    \[
        f(x)=\frac{1}{\sqrt{N}}\sum_{i=1}^{N}w_{c_{i}}\cos\left(\frac{\omega_{i}\cdot x}{\gamma}\right)+w_{s_{i}}\sin\left(\frac{\omega_{i}\cdot x}{\gamma}\right)\,,
    \]
    under the semi-norm $ \Vert\cdot\Vert_{L_{2}(D)} $ it is isometric
    with the $ 2N $-dimensional subspace $ U $ of $ \r^{m} $ spanned by the vectors
    \[
        \left\{\left[\cos\left(\frac{\omega_{i}\cdot
        x_{1}}{\gamma}\right),\ldots,\cos\left(\frac{\omega_{i}\cdot
        x_{m}}{\gamma}\right)\right]^{\t},\left[\sin\left(\frac{\omega_{i}\cdot
        x_{1}}{\gamma}\right),\ldots,\sin\left(\frac{\omega_{i}\cdot
        x_{m}}{\gamma}\right)\right]^{\t}\right\}_{i=1}^{N}
    \]
    for fixed $m$ samples. For each $f\in\mathcal{F}_{r}$, we have $R_{\p,\lambda}^{1}(f)-R^{*}\le r\,,$
    which implies that $\Vert f\Vert_{\mathcal{F}}\le(2r/\lambda)^{1/2}\,.$
    By the property of RKHS, we get
    \[
        \left|f(x)\right|\le\Norm f_{\mathcal{F}}\Norm{k(x,\cdot)}_{\mathcal{F}}\le\left(\frac{2r}{\lambda}\right)^{1/2}\cdot1\,,
    \]
    where we use the fact that $ k(x,\cdot) $ is the evaluation functional in
    the RKHS.

    Denote the isomorphism from $\mathcal{F}$ (modulo the equivalent class under the
    semi-norm) to $U$ by $I$. Then we
    have
    \[
        I(\mathcal{F}_{r})\subset B_{\infty}^{m}\left(\left(\frac{2r}{m\lambda}\right)^{1/2}\right)\cap U\subset B_{2}^{m}\left(\left(\frac{2r}{\lambda}\right)^{1/2}\right)\cap U\,.
    \]
    The intersection region can be identified as a ball of radius $(2r/\lambda)^{1/2}$
    in $\r^{2N}$. Its entropy number by volumetric estimate is given
    by
    \[
        e_{i}\left(B_{2}^{2N}\left(\left(\frac{2r}{\lambda}\right)^{1/2}\right),\Vert\cdot\Vert_{2}\right)\le3\left(\frac{2r}{\lambda}\right)^{1/2}2^{-\frac{i}{2N}}\,.
    \]
\end{proof}

With the lemmas above, we can get an upper bound on the entropy number
of $\mathcal{H}_{r}$.
However, we should note that such an upper bound
is not the best when $i$ is small.
Because the ramp loss $\ell^{1}$ is bounded by $2$, the radius of $\mathcal{H}_{r}$
with respect to $\Vert\cdot\Vert_{L_{2}(D)}$ is bounded
by $1$, which is irrelevant with $r/\lambda$. This observation will
give us finer control on the Rademacher complexity.
\begin{lemma}
    \label{lem:emprad-RFSVM} Assume that $\lambda < 1/2$. Then
    \[
      \mathfrak{R}_{D}(\mathcal{H}_{r})
      \le \sqrt{\frac{(\ln16)N\log_2{1/\lambda}}{m}}\left(3\sqrt{2}\rho
      + 18\sqrt{r}\right)\,,
    \]
    where $\rho=\sup_{h\in\mathcal{H}_{r}}\Vert h\Vert_{L_{2}(D)}$.
\end{lemma}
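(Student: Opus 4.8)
The plan is to start from the entropy-number (summation) form of Dudley's inequality, which controls $\mathfrak{R}_D(\mathcal{H}_r)$ by a weighted series $\frac{1}{\sqrt m}\sum_{i\ge1} w_i\, e_i(\mathcal{H}_r,\Vert\cdot\Vert_{L_2(D)})$ in the dyadic entropy numbers. Into this series I feed the two entropy estimates already established: the Lipschitz bound $e_i(\mathcal{H}_r,\Vert\cdot\Vert_{L_2(D)})\le e_i(\mathcal{F}_r,\Vert\cdot\Vert_{L_2(D)})$ together with the volumetric estimate $e_i(\mathcal{F}_r,\Vert\cdot\Vert_{L_2(D)})\le 3(2r/\lambda)^{1/2}2^{-i/2N}$. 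The decisive extra ingredient is the trivial radius bound $e_i(\mathcal{H}_r,\Vert\cdot\Vert_{L_2(D)})\le\rho$, valid for every $i$ because $\mathcal{H}_r$ is contained in the $\Vert\cdot\Vert_{L_2(D)}$-ball of radius $\rho$ about $0$. Thus I may use $e_i(\mathcal{H}_r)\le\min\{\rho,\,3(2r/\lambda)^{1/2}2^{-i/2N}\}$, which is exactly the ``finer control'' afforded by the boundedness of the ramp loss.

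Writing $b:=3(2r/\lambda)^{1/2}$, the two bounds cross near the index $i^\star\approx 2N\log_2(b/\rho)$: for $i\lesssim i^\star$ the volumetric factor $b\,2^{-i/2N}$ exceeds $\rho$ and one keeps the radius bound, whereas for $i\gtrsim i^\star$ the volumetric bound is smaller. I split the Dudley series at $i^\star$ and estimate each piece. The head, governed by $e_i\le\rho$, is a short sum of essentially $i^\star$ terms and contributes on the order of $\rho\sqrt{i^\star}\sim \rho\sqrt{N\log_2(b/\rho)}$; the tail, governed by the geometrically decaying volumetric bound, forms a convergent series whose total is of the same order as the head, indeed no larger. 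Hence $\mathfrak{R}_D(\mathcal{H}_r)\lesssim \frac{1}{\sqrt m}\,\rho\,\sqrt{N\log_2(b/\rho)}$, up to explicit absolute constants that account for the $\sqrt{\ln16}$ prefactor.

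The final, and most delicate, step is to turn the implicit quantity $\rho\sqrt{\log_2(b/\rho)}$ — which still carries the awkward scale $b=3(2r/\lambda)^{1/2}$ and would degenerate as $\rho\to0$ — into the clean additive form $(3\sqrt2\,\rho+18\sqrt r)\sqrt{\log_2(1/\lambda)}$. I would do this by a short case analysis on the ratio $\rho/\sqrt r$. When $\rho\ge\sqrt r$ one has $b/\rho\le 3\sqrt{2/\lambda}$, so $\log_2(b/\rho)\le\tfrac12\log_2(1/\lambda)+O(1)$ and the whole quantity is absorbed into the $\rho\sqrt{\log_2(1/\lambda)}$ term; when $\rho<\sqrt r$ the map $t\mapsto t\sqrt{\log_2(b/t)}$ is increasing on $(0,\sqrt r]$ since $\rho<\sqrt r\ll b$, so it is maximized at $t=\sqrt r$, giving $\sqrt r\,\sqrt{\log_2(b/\sqrt r)}\le \sqrt r\,\sqrt{\tfrac12\log_2(1/\lambda)+O(1)}$, which feeds the $\sqrt r\,\sqrt{\log_2(1/\lambda)}$ term. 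Collecting the two regimes yields the stated bound.

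I expect the main obstacle to be precisely this interplay between the two entropy estimates rather than any single calculation. The naive use of the volumetric bound alone forces the scale $\sqrt{r/\lambda}$ into the answer and produces the useless estimate $\sqrt{r/\lambda}\cdot\sqrt{N/m}$; the art is to recognize that the Dudley series is dominated by its early terms, where the ball-covering bound is vacuous, so that truncating there with the boundedness of $\mathcal{H}_r$ replaces $\sqrt{r/\lambda}$ by $\rho$ and relegates the entire $1/\lambda$ dependence to a single logarithm. Tracking the absolute constants through the split and through the final case analysis so that they collapse to exactly $3\sqrt2$, $18$, and $\ln16$ is the part requiring the most care.
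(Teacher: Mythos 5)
Your proposal is correct in substance and runs on the same engine as the paper's proof: Steinwart's dyadic entropy-number form of Dudley's inequality (with its $\sqrt{\ln 16}$ prefactor), the Lipschitz reduction $e_i(\mathcal{H}_r)\le e_i(\mathcal{F}_r)$, the volumetric estimate $3(2r/\lambda)^{1/2}2^{-i/2N}$, the trivial radius bound $\rho$, and a truncation of the resulting series. Where you genuinely diverge is the choice of truncation point and the endgame. The paper splits at the \emph{fixed} index $T=\log_2\bigl(2N\log_2(1/\lambda)\bigr)$ --- chosen not at the crossover of the two entropy bounds but precisely so that the tail factor $2^{-2^T/(4N)}$ equals $\sqrt{\lambda}$ and cancels the $\sqrt{r/\lambda}$ scale --- which delivers the additive bound $3\sqrt{2}\rho+18\sqrt{r}$ in one step, with no case analysis. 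You split instead at the data-dependent crossover $i^\star\approx 2N\log_2(b/\rho)$, $b=3(2r/\lambda)^{1/2}$, obtain the product-form bound $\rho\sqrt{N\log_2(b/\rho)}/\sqrt{m}$, and then convert it to the additive form via the $\rho\gtrless\sqrt{r}$ case analysis (your monotonicity claim for $t\mapsto t\sqrt{\log_2(b/t)}$ on $(0,\sqrt{r}]$ does check out, since $\lambda<1/2$ forces $b\ge 6\sqrt{r}$). Your route is the more canonical chaining argument and its intermediate bound is sharper, since it adapts to $\rho$; but it costs two things the fixed split avoids. First, edge cases: when $b\le\rho$ or $b/\rho$ is near $1$, the quantity $\log_2(b/\rho)$ is non-positive or tiny, the product form degenerates, and your claim that the tail is dominated by its first term fails (the tail is then governed by the terms up to dyadic index $\approx\log_2(2N)$); the correct intermediate statement is of the form $\rho\sqrt{N\bigl(1+\max\{0,\log_2(b/\rho)\}\bigr)}$, and these regimes need to be dispatched separately (they do still satisfy the final bound, since there the whole sum is of order $\rho\sqrt{N}$). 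Second, constants: your split and case analysis will not reproduce exactly $3\sqrt{2}$ and $18$, so strictly you prove the lemma only up to the values of the absolute constants --- harmless for the paper, since Lemma~\ref{lem:rad-RFSVM} and everything downstream absorb them into unspecified constants $C_1,C_2$, but worth stating rather than hoping they ``collapse'' to the quoted values.
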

\begin{proof}
    By Theorem 7.13 in \cite{Steinwart2008}, we have
    \begin{align*}
        \mathfrak{R}_{D}(\mathcal{H}_{r}) & \le  \sqrt{\frac{\ln16}{m}}\left(\sum_{i=1}^{\infty}2^{i/2}e_{2^{i}}(\mathcal{H}_{r}\cup\{0\},\Vert\cdot\Vert_{L_{2}(D)})+\sup_{h\in\mathcal{H}_{r}}\Vert h\Vert_{L_{2}(D)}\right)\,.
    \end{align*}
    It is easy to see that $e_{i}(\mathcal{H}_{r}\cup\{0\})\le e_{i-1}(\mathcal{H}_{r})$
    and $e_{0}(\mathcal{H}_{r})\le\sup_{h\in\mathcal{H}_{r}}\Vert h\Vert_{L_{2}(D)}$.
    Since $e_{i}(\mathcal{H}_{r})$ is a decreasing sequence with respect
    to $i$, together with the lemma above, we know that
    \[
        e_{i}(\mathcal{H}_{r})\le\min\left\{ \sup_{h\in\mathcal{H}_{r}}\Vert h\Vert_{L_{2}(D)},3\left(\frac{2r}{\lambda}\right)^{1/2}2^{-\frac{i}{2N}}\right\} \,.
    \]
    Even though the second one decays exponentially, it may be much greater
    than the first term when $2r/\lambda$ is huge for small $i$s. To
    achieve the balance between these two bounds, we use the first one
    for first $T$ terms in the sum and the second one for the tail. So
    \[
        \mathfrak{R}_{D}(\mathcal{H}_{r})\le\sqrt{\frac{\ln16}{m}}\left(\sup_{h\in\mathcal{H}_{r}}\Vert h\Vert_{L_{2}(D)}\sum_{i=0}^{T-1}2^{i/2}+3\left(\frac{2r}{\lambda}\right)^{1/2}\sum_{i=T}^{\infty}2^{i/2}2^{-\frac{2^{i}-1}{2N}}\right)\,.
    \]
    The first sum is $\frac{\sqrt{2}^{T}-1}{\sqrt{2}-1}$. When $ T $ is large enough,
    the second sum is upper bounded by the integral
    \begin{align*}
        \int_{T-1}^{\infty}2^{x/2}2^{-2^x-1/2N}\,\d x & \le
        \frac{6N}{2^{T/2}}\cdot2^{-\frac{2^{T}}{4N}}\,.
    \end{align*}
    To make the form simpler, we bound $\frac{\sqrt{2}^{T}-1}{\sqrt{2}-1}$
    by $3\cdot2^{T/2}$, and denote $\sup_{h\in\mathcal{H}_{r}}\Vert h\Vert_{L_{2}(D)}$
    by $\rho$. Taking $T$ to be
    \[
        \log_{2}\left(2N\log_{2}\left(\frac{1}{\lambda}\right)\right)\,,
    \]
    we get the upper bound of the form
    \[
        \mathfrak{R}_{D}(\mathcal{H}_{r})
        \le \sqrt{\frac{\ln16}{m}}\left(3\rho\sqrt{2N\log_2\frac{1}{\lambda}}
        + \frac{18\sqrt{Nr}}{\log_2(1/\lambda)}\right)\,,
    \]
    When $\lambda < 1/2$, $ \log_2{1/\lambda} > 1 $,
    so we can further enlarge the upper bound to the form
    \[
        \mathfrak{R}_{D}(\mathcal{H}_{r})
        \le \sqrt{\frac{(\ln16)N\log_2{1/\lambda}}{m}}\left(3\sqrt{2}\rho
        + 18\sqrt{r}\right)\,,
    \]
\end{proof}
Next lemma analyzes the expected Rademacher complexity for $\mathcal{H}_{r}$.
\begin{lemma}
    \label{lem:rad-RFSVM}
    Assume $ \lambda < 1/2 $ and $\e h^{2}(x,y) \le V\e h(x,y)$.
    Then
    \[
      \mathfrak{R}_{m}(\mathcal{H}_{r})
      \le  C_{1}\sqrt{\frac{N(V+1)\log_2(1/\lambda)}{m}}\sqrt{r}
      + C_2 \frac{N\log_2(1/\lambda)}{m}\,.
    \]
\end{lemma}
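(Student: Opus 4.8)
The plan is to turn the empirical Rademacher bound of \lemref{lem:emprad-RFSVM} into a bound on the expected complexity $\mathfrak{R}_m(\mathcal{H}_r)=\e_D\mathfrak{R}_D(\mathcal{H}_r)$. The only sample-dependent quantity on the right-hand side of that lemma is the empirical radius $\rho=\sup_{h\in\mathcal{H}_r}\Norm{h}_{L_2(D)}$, so after applying $\e_D$ and Jensen's inequality ($\e_D\rho\le\sqrt{\e_D\rho^2}$) it suffices to control $\e_D\rho^2=\e_D\sup_{h}\frac1m\sum_i h^2(x_i,y_i)$. I would split this into its mean plus a fluctuation term,
\[
  \e_D\sup_h\frac1m\sum_i h^2 \le \sup_h\e h^2 + \e_D\sup_h\left(\frac1m\sum_i h^2 - \e h^2\right)\,.
\]

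The first term is the localization step. For any $h=\ell^1\circ f-\ell^1\circ f^*_{\p}$ with $f\in\mathcal{F}_r$, the constraint $R^1_{\p,\lambda}(f)-R^*\le r$ gives $\e h = R^1_\p(f)-R^1_\p(f^*_{\p})\le R^1_\p(f)-R^*\le r$, and the assumed variance condition $\e h^2\le V\e h$ then yields $\sup_h\e h^2\le Vr$. For the fluctuation term I would symmetrize (costing a factor $2$) and then apply the Ledoux--Talagrand contraction inequality: since $h$ takes values in a bounded interval, the map $t\mapsto t^2$ is Lipschitz on that range and vanishes at $0$, so the Rademacher complexity of the squared class $\{h^2\}$ is at most a universal constant times $\mathfrak{R}_m(\mathcal{H}_r)$ itself. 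Combining the two pieces gives $\e_D\rho^2\le Vr + c\,\mathfrak{R}_m(\mathcal{H}_r)$ for an absolute constant $c$.

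Substituting back and abbreviating $A=\sqrt{(\ln16)N\log_2(1/\lambda)/m}$ produces the self-bounding inequality
\[
  \mathfrak{R}_m(\mathcal{H}_r)\le A\left(3\sqrt{2}\sqrt{Vr + c\,\mathfrak{R}_m(\mathcal{H}_r)} + 18\sqrt{r}\right)\,.
\]
The final step is to solve this for $\mathfrak{R}_m(\mathcal{H}_r)$. I would use $\sqrt{Vr+cB}\le\sqrt{Vr}+\sqrt{cB}$ to isolate a term of the form $aA\sqrt{\mathfrak{R}_m(\mathcal{H}_r)}$, absorb it by the elementary inequality $aA\sqrt{B}\le\frac12 B+\frac{a^2A^2}{2}$, and rearrange. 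This leaves $\mathfrak{R}_m(\mathcal{H}_r)$ bounded by a multiple of $A^2=(\ln16)N\log_2(1/\lambda)/m$, which becomes the $C_2$ term, plus a multiple of $A(\sqrt V+1)\sqrt r$, which I fold into $C_1\sqrt{N(V+1)\log_2(1/\lambda)/m}\,\sqrt r$ using $\sqrt V+1\le 2\sqrt{V+1}$.

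The main obstacle is the self-referential appearance of $\mathfrak{R}_m(\mathcal{H}_r)$ on both sides, which is introduced precisely because the empirical radius $\rho$ can only be controlled through its second moment and the variance condition. Getting a clean final bound requires peeling the squared class carefully by contraction (tracking the Lipschitz constant on the bounded range of $h$) and then resolving the resulting quadratic self-bound without degrading the crucial $N\log_2(1/\lambda)$ dependence, rather than the naive $N/\lambda$, that ultimately makes the fast rate possible.
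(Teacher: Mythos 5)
Your proposal is correct and follows essentially the same route as the paper: your mean-plus-fluctuation decomposition of $\e_{D}\sup_{h}\frac{1}{m}\sum_i h^2(x_i,y_i)$, handled by the localization/variance condition together with symmetrization and contraction, is precisely the content of the moment inequality (A.8.5 of \cite{Steinwart2008}) that the paper invokes at that step, after which both arguments face the same self-referential bound in $\mathfrak{R}_m(\mathcal{H}_r)$. The only difference is cosmetic: you resolve the self-bounding inequality by splitting the square root and absorbing the $\sqrt{\mathfrak{R}_m(\mathcal{H}_r)}$ term via AM--GM, whereas the paper does a case analysis (on whether $\sigma^2$, and then $r$, dominates $\mathfrak{R}_m(\mathcal{H}_r)$); both yield the stated bound with absolute constants and preserve the crucial $N\log_2(1/\lambda)$ dependence.
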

\begin{proof}
    With \lemref{lem:emprad-RFSVM}, we can directly compute the upper bound
    for $\mathfrak{R}_{m}(\mathcal{H}_{r})$ by taking expectation over
    $ D \sim \p^{m}$.
    \begin{align*}
        \mathfrak{R}_{m}(\mathcal{H}_{r})
        & =  \e_{D\sim\p^{m}}\mathfrak{R}_{D}(\mathcal{H}_{r})\\
        & \le \sqrt{\frac{(\ln16)N\log_2{1/\lambda}}{m}}
        \left(3\sqrt{2}\e\sup_{h\in\mathcal{H}_r}\Vert h \Vert_{L_2(D)}
        + 18\sqrt{r}\right)\,.
    \end{align*}
    By Jensen's inequality and A.8.5 in \cite{Steinwart2008}, we have
    \begin{align*}
        \e\sup_{h\in\mathcal{H}_{r}}\Vert h\Vert_{L_{2}(D)} & \le  \left(\e\sup_{h\in\mathcal{H}_{r}}\Vert h\Vert_{L_{2}(D)}^{2}\right)^{1/2}\\
         & \le  \left(\e\sup_{h\in\mathcal{H}_{r}}\frac{1}{m}\sum_{i=1}^{m}h^{2}(x_{i},y_{i})\right)^{1/2}\\
         & \le  \left(\sigma^{2}+8\mathfrak{R}_{m}(\mathcal{H}_{r})\right)^{1/2}\,,
    \end{align*}
    where $\sigma^{2}:=\e h^{2}$. When $\sigma^{2} > \mathfrak{R}_{m}(\mathcal{H}_{r})$,
    we have
    \begin{align*}
        \mathfrak{R}_{m}(\mathcal{H}_{r})
        & \le \sqrt{\frac{(\ln16)N\log_2(1/\lambda)}{m}}
        \left(9\sqrt{2}\sigma + 18\sqrt{r}\right) \\
        & \le \sqrt{\frac{(\ln16)N\log_2(1/\lambda)}{m}}
        \left(9\sqrt{2}\sqrt{Vr} + 18\sqrt{r}\right) \\
        & \le 36\sqrt{\frac{2(\ln16)N(V+1)\log_2(1/\lambda)}{m}}\sqrt{r}\,.
    \end{align*}
    The second inequality is because $ \e h^2 \le V\e h $ and
    $ \e h \le r $ for $ h\in\mathcal{H}_r $.

    When $\sigma^{2} \le \mathfrak{R}_{m}(\mathcal{H}_{r})$, we have
    \begin{align*}
        \mathfrak{R}_{m}(\mathcal{H}_{r})
        & \le \sqrt{\frac{(\ln16)N\log_2(1/\lambda)}{m}}
        \left(9\sqrt{2}\sqrt{\mathfrak{R}_m(\mathcal{H}_r)} + 18\sqrt{r}\right) \\
        & \le 36\sqrt{\frac{(\ln16)N\log_2(1/\lambda)}{m}}\sqrt{r}
        + 36^2\frac{(\ln16)N\log_2(1/\lambda)}{m}\,.
    \end{align*}
    The last inequality can be obtained by dividing the formula into two cases,
    either $ \mathfrak{R}_m(\mathcal{H}_r) < r $ or
    $ \mathfrak{R}_m(\mathcal{H}_r) \ge r $ and then take the sum of the upper
    bounds of two cases.

    Combining all these inequalities, we finally obtain an upper bound
    \[
      \mathfrak{R}_m(\mathcal{H}_r) \le C_1\sqrt{\frac{(V+1)N\log_2(1/\lambda)}{m}}\sqrt{r}
      + C_2\frac{N\log_2(1/\lambda)}{m}\,,
    \]
    where $ C_1 $ and $ C_2 $ are two absolute constants.
\end{proof}

The last lemma gives the explicit formula of $ \varphi_m(r) $. Now we can get
the formula for $ r $.
\begin{lemma}
  \label{lem:upper-bd-r}
  When
  \begin{equation}
    \label{eq:upper-bd-r}
    r = (900C_1^2 + 120C_2)N(V+1)\frac{\ln(1/\lambda)}{m}
    + (5B_0+72V)\frac{\ln(1/\delta)}{m}
  \end{equation}
  we have
  \begin{equation}
   r\ge \max\{30\varphi_{m}(r),
        \frac{72V\ln(1/\delta)}{m},
        \frac{5B_{0}\ln(1/\delta)}{m}\}.
  \end{equation}
\end{lemma}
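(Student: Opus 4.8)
The plan is to treat this as a purely elementary calculation, since all the probabilistic content is already packaged into \lemref{lem:rad-RFSVM}. First I would set $\varphi_m(r) := C_1\sqrt{N(V+1)\log_2(1/\lambda)/m}\,\sqrt{r} + C_2 N\log_2(1/\lambda)/m$, i.e.\ the exact upper bound on $\mathfrak{R}_m(\mathcal{H}_r)$ supplied by that lemma, so that the hypothesis $\mathfrak{R}_m(\mathcal{H}_r)\le\varphi_m(r)$ of \thmref{thm:est-error} is automatic. Writing $\varphi_m(r)=A\sqrt{r}+B$ with $A,B\ge 0$, I would record at the outset that $\varphi_m(4r)=2A\sqrt{r}+B\le 2(A\sqrt{r}+B)=2\varphi_m(r)$, which is the sub-multiplicativity hypothesis of \thmref{thm:est-error}; this costs nothing but is needed before the formula for $r$ can be fed into the estimation-error bound downstream.

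The two conditions $r\ge 72V\ln(1/\delta)/m$ and $r\ge 5B_0\ln(1/\delta)/m$ I would dispatch immediately. Since $\lambda<1/2$ and $\delta\le 1$, both $\ln(1/\lambda)$ and $\ln(1/\delta)$ are nonnegative, so $r$ is a sum of two nonnegative terms and in particular $r\ge(5B_0+72V)\ln(1/\delta)/m$. Because $5B_0+72V$ exceeds each of $72V$ and $5B_0$ separately (as $B_0,V\ge 0$), both inequalities follow at once.

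The substance is the bound $r\ge 30\varphi_m(r)$, which I would reduce to a quadratic inequality in $t:=\sqrt{r}$. Setting $a:=30A$ and $b:=30B$, the target becomes $t^2-at-b\ge 0$. The elementary fact I would invoke is that for $a,b\ge 0$ this holds whenever $r=t^2\ge a^2+2b$: the positive root is $t_+=(a+\sqrt{a^2+4b})/2$, and $t_+^2\le a^2+2b$ reduces, after clearing the square root and squaring, to the trivially true $0\le 4b^2$. Hence it suffices to guarantee $r\ge a^2+2b=900A^2+60B$.

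Finally I would substitute $A^2=C_1^2 N(V+1)\log_2(1/\lambda)/m$ and $B=C_2 N\log_2(1/\lambda)/m$ into $900A^2+60B$, use $V+1\ge 1$ to bound the $60B$ term by $60C_2(V+1)N\log_2(1/\lambda)/m$, and convert $\log_2(1/\lambda)$ to $\ln(1/\lambda)$ at the cost of the factor $1/\ln 2$. Folding these slack factors into the absolute constants $C_1,C_2$ reproduces the first summand $(900C_1^2+120C_2)N(V+1)\ln(1/\lambda)/m$ of the stated $r$, while the second summand $(5B_0+72V)\ln(1/\delta)/m$ is precisely what made the two easy conditions hold. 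The only mild obstacle here is bookkeeping: keeping the $\log_2$-versus-$\ln$ conversion and the $(V+1)\ge 1$ slack consistent so that the advertised constants emerge cleanly. There is no conceptual difficulty once the quadratic is solved.
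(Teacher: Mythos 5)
Your proposal follows the same route as the paper: the paper's entire proof of \lemref{lem:upper-bd-r} is the single remark that the claim ``can be checked by simply plugging $r$ into $\varphi_m(r)$,'' and your write-up is that check actually carried out. The two $\ln(1/\delta)$ conditions are dispatched exactly as the paper intends (nonnegativity of each summand), the sub-multiplicativity check $\varphi_m(4r)\le 2\varphi_m(r)$ is a correct and worthwhile addition, and your quadratic-root argument --- that $r\ge a^2+2b$ forces $t^2-at-b\ge 0$, verified by squaring --- is the right way to make ``plugging in'' rigorous; indeed the advertised coefficients $900C_1^2=(30C_1)^2$ and $120C_2\ge 2\cdot(30C_2)$ visibly encode this computation.

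There is, however, one step that does not survive scrutiny as written: the logarithm conversion goes in the \emph{unfavorable} direction. \lemref{lem:rad-RFSVM} states the Rademacher bound with $\log_2(1/\lambda)$, while \eqref{eq:upper-bd-r} carries $\ln(1/\lambda)$, and $\log_2(1/\lambda)=\ln(1/\lambda)/\ln 2\ge\ln(1/\lambda)$. So after substitution your sufficient condition $r\ge 900A^2+60B$ reads, in natural-log terms,
\[
  r \;\ge\; \frac{900C_1^2}{\ln 2}\,\frac{N(V+1)\ln(1/\lambda)}{m}
  \;+\;\frac{60C_2}{\ln 2}\,\frac{N\ln(1/\lambda)}{m}\,,
\]
and while $120C_2\ge 60C_2/\ln 2$ takes care of the second term, nothing in the stated $r$ dominates $900C_1^2/\ln 2\approx 1300\,C_1^2$. ``Folding the factor $1/\ln 2$ into the absolute constants'' therefore cannot reproduce the advertised $900C_1^2+120C_2$ if $C_1,C_2$ are literally the constants of \lemref{lem:rad-RFSVM}: the coefficient $900$ is tied to the same $C_1$ on both sides, so there is no slack to absorb the inflation. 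To be fair, this mismatch is present in the paper itself (it mixes $\log_2$ in \lemref{lem:rad-RFSVM} with $\ln$ in \lemref{lem:upper-bd-r}), and the fix is purely cosmetic: restate \lemref{lem:rad-RFSVM} in natural log first, absorbing $1/\ln 2\le 2$ into fresh absolute constants $C_1,C_2$ there; with that convention your verification goes through verbatim, with room to spare coming from $120\ge 60$ and $V+1\ge 1$. As it stands, though, your final paragraph asserts an inequality between explicit constants that is false, so the bookkeeping needs this repair rather than being waved through.
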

It can be check by simply plugging $ r $ into $ \varphi_m(r) $.

\section{Proof of \thmref{thm:realizable}}
\label{app:realizable}

With \thmref{thm:est-error} and \lemref{lem:upper-bd-r}, we are almost done with
the proof of \thmref{thm:realizable}. The only missing part is an upper bound
of the approximation error $ R_{\p,\lambda}^h(f_{0})-R^{*} $. This upper bound
has been established in Proposition~1 in \cite{Bach2017}. We rephrase it as below.

\begin{theorem}
  \label{thm:app-err} (Proposition~1 of \cite{Bach2017})
  Assume that $ \phi $ is an optimized feature map and $ f $ belongs to the
  RKHS $ \mathcal{F} $ of $ \phi $. For $ \delta>0 $, when
  \begin{equation}
    N\ge 5d(\mu)\log\left(\frac{16d(\mu)}{\delta}\right)\,,
  \end{equation}
  there exists $ \beta\in\r^N $ with norm less than $ 2 $, such that
  \begin{equation}
    \sup_{\Vert f\Vert_\mathcal{F}\le 1} \Vert f-
    \beta\cdot\phi_N(\cdot)\Vert_{L^2(\mathcal{X},\p)} \le 2\sqrt{\mu}\,,
  \end{equation}
  with probability greater than $ 1-\delta $.
\end{theorem}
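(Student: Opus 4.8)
The plan is to follow the operator-theoretic strategy underlying Bach's quadrature argument, reducing the claim to a single concentration inequality for the empirical covariance of the features. First I would use that the RKHS $ \mathcal{F} $ is the range of $ \Sigma^{1/2} $ under the isometry $ \Vert\Sigma^{1/2}g\Vert_{\mathcal{F}}=\Vert g\Vert_{L^2(\p)} $, so any target with $ \Vert f\Vert_{\mathcal{F}}\le 1 $ can be written $ f=\Sigma^{1/2}g $ with $ \Vert g\Vert_{L^2(\p)}\le 1 $. Next I would introduce the empirical covariance operator $ \hat{\Sigma}_N=\frac{1}{N}\sum_{i=1}^N \phi(\omega_i;\cdot)\otimes\phi(\omega_i;\cdot) $ on $ L^2(\p) $, whose expectation over $ \omega_i\sim\nu $ is exactly $ \Sigma $, and record the structural fact that $ \operatorname{ran}\hat{\Sigma}_N=\operatorname{span}\{\phi(\omega_i;\cdot)\} $, so that every $ \hat{f}\in\operatorname{ran}\hat{\Sigma}_N $ is automatically of the form $ \beta\cdot\phi_N $ for an explicit $ \beta\in\r^N $ read off from $ \hat{\Sigma}_N $.

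The core of the proof is the concentration step. I would define the preconditioned summands $ A_i=(\Sigma+\mu I)^{-1/2}\phi(\omega_i;\cdot)\otimes\phi(\omega_i;\cdot)(\Sigma+\mu I)^{-1/2} $, which are i.i.d.\ positive operators with mean $ \bar{A}=(\Sigma+\mu I)^{-1/2}\Sigma(\Sigma+\mu I)^{-1/2} $. Here $ \operatorname{tr}(\bar{A})=d(\mu) $ and $ \Vert\bar{A}\Vert\le 1 $, while Assumption~\ref{opt-features} supplies the decisive per-term bound $ \Vert A_i\Vert=\Vert(\Sigma+\mu I)^{-1/2}\phi(\omega_i;\cdot)\Vert^2\le d(\mu) $. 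These are precisely the inputs for an intrinsic-dimension (effective-rank) matrix Bernstein inequality, whose controlling dimension is $ d(\mu) $ rather than the ambient dimension; applying it with deviation threshold $ 1/2 $ would give $ \Vert(\Sigma+\mu I)^{-1/2}(\Sigma-\hat{\Sigma}_N)(\Sigma+\mu I)^{-1/2}\Vert\le 1/2 $ with probability at least $ 1-\delta $ as soon as $ N\ge 5d(\mu)\log(16d(\mu)/\delta) $.

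On this high-probability event I would take the regularized random-feature approximation $ \hat{f}=\hat{\Sigma}_N(\hat{\Sigma}_N+\mu I)^{-1}f $, which lies in $ \operatorname{ran}\hat{\Sigma}_N $ and hence equals $ \beta\cdot\phi_N $. The error is $ f-\hat{f}=\mu(\hat{\Sigma}_N+\mu I)^{-1}f $; the operator comparison $ \hat{\Sigma}_N+\mu I\succeq\frac{1}{2}(\Sigma+\mu I) $ furnished by the event, combined with $ \Vert(\hat{\Sigma}_N+\mu I)^{-1/2}\Vert\le\mu^{-1/2} $ and $ \Vert(\Sigma+\mu I)^{-1/2}\Sigma^{1/2}\Vert\le 1 $ applied to $ f=\Sigma^{1/2}g $, yields $ \Vert f-\hat{f}\Vert_{L^2(\p)}\le\sqrt{2\mu}\,\Vert f\Vert_{\mathcal{F}}\le 2\sqrt{\mu} $. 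The same comparison bounds the coefficients: a direct computation gives $ \Vert\beta\Vert^2=\langle(\hat{\Sigma}_N+\mu I)^{-1}f,\,\hat{\Sigma}_N(\hat{\Sigma}_N+\mu I)^{-1}f\rangle\le 2\Vert g\Vert^2\le 2 $, so $ \Vert\beta\Vert<2 $.

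The main obstacle is the concentration step itself: establishing the effective-dimension matrix Bernstein bound in the infinite-dimensional operator setting and verifying that Assumption~\ref{opt-features} feeds in exactly the quantity $ d(\mu) $ in its two roles (intrinsic dimension and per-term norm bound) so as to reproduce the sample size $ 5d(\mu)\log(16d(\mu)/\delta) $ with the stated numerical constants. Everything downstream is deterministic spectral calculus on the event that $ \hat{\Sigma}_N $ is sandwiched around $ \Sigma $ in the $ (\Sigma+\mu I) $-geometry.
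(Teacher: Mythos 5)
You should note at the outset that the paper contains no proof of this statement to compare against: Theorem~\ref{thm:app-err} is imported, with slight rephrasing, as Proposition~1 of \cite{Bach2017}, so the only meaningful benchmark is Bach's original argument --- and your reconstruction is essentially that argument (preconditioned empirical covariance, intrinsic-dimension operator Bernstein, ridge projection onto the span of the sampled features). Your deterministic steps are correct, and they can even be packaged into a single identity: writing $T\beta:=\beta\cdot\phi_N$, so that $TT^*=\hat{\Sigma}_N$, the ridge solution $\beta=T^*(\hat{\Sigma}_N+\mu I)^{-1}f$ satisfies $\Vert f-T\beta\Vert_{L^2(\p)}^2+\mu\Vert\beta\Vert^2=\mu\langle f,(\hat{\Sigma}_N+\mu I)^{-1}f\rangle$, and on your event $\hat{\Sigma}_N+\mu I\succeq\frac{1}{2}(\Sigma+\mu I)$ this is at most $2\mu\langle g,\Sigma(\Sigma+\mu I)^{-1}g\rangle\le 2\mu$, which yields $\Vert f-\beta\cdot\phi_N\Vert_{L^2(\p)}\le\sqrt{2\mu}\le 2\sqrt{\mu}$ and $\Vert\beta\Vert\le\sqrt{2}<2$ simultaneously. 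Two remarks. First, your reading of the quantifiers is the right one: $\beta$ must be allowed to depend on $f$, i.e.\ the content is $\sup_f\inf_\beta$, as in Bach's original statement; the paper's rephrasing, which places ``there exists $\beta$'' outside the supremum, cannot hold literally, since no single coefficient vector approximates every $f$ in the unit ball. Second, the one step you defer --- a Bernstein inequality for self-adjoint operators with intrinsic dimension, valid in infinite dimensions and sharp enough to produce the constants $5$ and $16$ --- is precisely the technical core of Bach's appendix (he derives it from Minsker/Tropp-style operator concentration); your identification of its inputs, namely $\mathrm{tr}(\bar{A})=d(\mu)$, $\Vert\bar{A}\Vert\le 1$, and the per-sample bound $\Vert A_i\Vert\le d(\mu)$ supplied by Assumption~\ref{opt-features}, is exactly what that inequality consumes. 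So the proposal is a faithful and correct outline of the source's proof, complete except for the quantitative concentration lemma it explicitly flags.
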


Now we prove two simple lemmas connecting the decay rate of $ \Sigma $ to the
magnitude of $ d(\mu) $.
\begin{lemma}
  \label{lem:dof}
  If $ \lambda_i(\Sigma) \le c_1 i^{-c_2} $, where $ c_2 > 1 $, we have
  \begin{equation}
    d(\mu) \le \frac{2c_2}{c_2-1}\left(\frac{c_1}{\mu}\right)^{1/c_2}\,,
  \end{equation}
  for $ \mu < c_1 $.

  If $ \lambda_i(\Sigma) \le c_3 \exp(-c_4 i^{1/d})$, we have
  \begin{equation}
    d(\mu) \le 5c_4^{-d}\ln^d(c_3/\mu)\,,
  \end{equation}
  for $ \mu < c_3\exp\left(-\left(c_4\vee\frac{1}{c_4}\right)d^2\right) $.
\end{lemma}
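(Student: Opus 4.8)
The plan is to estimate the degrees of freedom directly from its series representation
\[
  d(\mu) = \sum_{i=1}^\infty \frac{\lambda_i(\Sigma)}{\lambda_i(\Sigma) + \mu}\,,
\]
using only the two elementary pointwise bounds $\frac{\lambda_i}{\lambda_i+\mu}\le 1$ and $\frac{\lambda_i}{\lambda_i+\mu}\le \lambda_i/\mu$, together with the assumed decay estimate on $\lambda_i(\Sigma)$. The key idea is to split the sum at the crossover index $i^*$ at which the decay upper bound crosses the level $\mu$: for $i\le i^*$ I bound each term by $1$, so the head contributes at most $\lceil i^*\rceil$, while for $i>i^*$ I bound each term by $\lambda_i/\mu$ and control the resulting monotone tail by its integral (legitimate since the decay bound is decreasing). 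Both claims are then two instances of this single scheme, and the stated constants come from carefully accounting for the tail integral and for rounding $i^*$ up to an integer.

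For the polynomial case, $u_i := c_1 i^{-c_2}$ equals $\mu$ at $i^* = (c_1/\mu)^{1/c_2}$, which exceeds $1$ precisely because $\mu < c_1$; this is what forces the factor $2$ (and hence the $2$ in the stated constant) through the rounding $\lceil i^*\rceil \le 2i^*$. The tail is at most
\[
  \frac{c_1}{\mu}\int_{i^*}^\infty x^{-c_2}\,\d x
  = \frac{c_1}{\mu}\,\frac{(i^*)^{1-c_2}}{c_2-1}
  = \frac{1}{c_2-1}\left(\frac{c_1}{\mu}\right)^{1/c_2}\,,
\]
since the exponents collapse to $1/c_2$. Adding the head bound $2(c_1/\mu)^{1/c_2}$ gives $d(\mu)\le\bigl(2+\tfrac{1}{c_2-1}\bigr)(c_1/\mu)^{1/c_2}$, which is dominated by $\frac{2c_2}{c_2-1}(c_1/\mu)^{1/c_2}$; this direction is routine.

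For the sub-exponential case the same split, now with $u_i := c_3\exp(-c_4 i^{1/d})$, puts the crossover at $i^* = c_4^{-d}\ln^d(c_3/\mu)$, so the head contributes at most $\lceil i^*\rceil$. The real work is the tail
\[
  \frac{c_3}{\mu}\int_{i^*}^\infty e^{-c_4 x^{1/d}}\,\d x\,;
\]
the substitution $u=x^{1/d}$ followed by $t=c_4 u$ turns this into $\frac{c_3}{\mu}\,d\,c_4^{-d}\,\Gamma(d,L)$ with $L := \ln(c_3/\mu)$, where $\Gamma(d,L)=\int_L^\infty t^{d-1}e^{-t}\,\d t$ is the upper incomplete Gamma function. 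I would bound it by the standard estimate $\Gamma(d,L)\le L^{d-1}e^{-L}/(1-(d-1)/L)$, valid once $L>d-1$ and proved in one line of integration by parts together with $t^{d-2}\le t^{d-1}/L$ on $[L,\infty)$. The crucial cancellation is $\frac{c_3}{\mu}e^{-L}=1$, which removes the prefactor and leaves the tail equal to $c_4^{-d}\ln^d(c_3/\mu)$ times the harmless factor $d/(L-(d-1))$.

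The main obstacle, and the source of the hypothesis $\mu < c_3\exp\bigl(-(c_4\vee\tfrac{1}{c_4})d^2\bigr)$, is precisely verifying that this $d$-dependent tail factor is controlled. That hypothesis forces $L=\ln(c_3/\mu) > (c_4\vee\tfrac1{c_4})d^2$, which simultaneously guarantees $i^*\ge 1$ (so that the rounding $\lceil i^*\rceil\le 2i^*$ is legitimate whether $c_4$ is large or small, explaining the $c_4\vee\tfrac1{c_4}$) and makes $L$ large enough relative to $d$ that $d/(L-(d-1))$ is bounded by a small constant. Summing the head bound $2i^*$ with the tail then collapses everything into $5\,c_4^{-d}\ln^d(c_3/\mu)$. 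I expect the only genuinely delicate point to be keeping the incomplete Gamma estimate clean enough that the final constant is an honest $5$ — in fact the condition leaves considerable slack, so the exact value of the constant is a matter of bookkeeping, while the polynomial case and all the substitutions are mechanical.
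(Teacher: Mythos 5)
Your proposal is correct, and its skeleton --- splitting $d(\mu)$ at the crossover index where the decay bound equals $\mu$, bounding the head by counting terms, and comparing the tail with an integral --- is exactly the paper's; in particular your polynomial case is the paper's argument almost verbatim, with the same absorption of the rounding error into the constant. Where you genuinely depart is the sub-exponential tail. The paper avoids special functions: it treats $d=1$ by direct integration, and for $d\ge 2$ it dominates $\exp(-c_4x^{1/d})$ on $[t_\mu,\infty)$ by the power function $x^{-c_4 t_\mu^{1/d}/\ln t_\mu}$ (valid because $x\mapsto x^{1/d}/\ln x$ is increasing for $x\ge e^d$), integrates that power, and then needs separate subcases $c_4\ge 1$ and $c_4<1$ to verify that the denominator $c_4 t_\mu^{1/d}/\ln t_\mu-1$ stays bounded below. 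You instead evaluate the tail exactly: the substitution produces $\frac{c_3}{\mu}\,d\,c_4^{-d}\,\Gamma(d,L)$ with $L=\ln(c_3/\mu)$, the integration-by-parts bound $\Gamma(d,L)\le L^{d-1}e^{-L}/\bigl(1-(d-1)/L\bigr)$ applies since the hypothesis on $\mu$ forces $L>d^2>d-1$, and the cancellation $(c_3/\mu)e^{-L}=1$ leaves the tail at most $i^*\cdot d/(L-d+1)\le i^*$. Your route is more uniform --- no case split on $d$ or on $c_4$, and it in fact delivers the constant $3$ rather than $5$ --- at the mild cost of invoking the incomplete Gamma function; the paper's route stays entirely elementary but pays in case analysis and a looser constant. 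Both proofs use the hypothesis on $\mu$ for the same two purposes you identify: ensuring the crossover index is at least $1$ so the rounding loses only a factor $2$, and making $L$ large enough relative to $d$ that the tail factor is harmless.
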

\begin{proof}
  Both results make use the following observation:
  \begin{equation}
    d(\mu) = \sum_{i=1}^\infty \frac{\lambda_i}{\lambda_i + \mu}
    \le m_\mu + \frac{1}{\mu}\sum_{m_\mu + 1}^\infty \lambda_i\,,
  \end{equation}
  where $ m_\mu = \max\{ i:\lambda_i \le \mu \}\vert $.

  When $ \lambda_i\le c_1i^{-c_2} $,
  denote $ t_\mu = (c_1/\mu)^{1/c_2} $
  and then $ m_\mu = \lfloor t_\mu \rfloor $. For the tail part,
  \begin{align}
    \frac{1}{\mu}\sum_{m_\mu + 1}^\infty \lambda_i & \le 1 + \frac{1}{\mu}
    \int_{t_\mu}^\infty c_1 x^{-c_2}~\d x \\
    & \le 1 + \frac{1}{c_2-1}\left(\frac{c_1}{\mu}\right)^{\frac{1}{c_2}}\,.
  \end{align}
  Combining them together, when $ c_1/\mu>1 $, the constant $ 1 $ can be absorbed
  by the second term with a coefficient $ 2 $.

  When $ \lambda_i \le c_3\exp(-c_4i^{1/d}) $,
  denote $ t_\mu = \frac{1}{c_4^d}\ln^d\left(\frac{c_3}{\mu}\right) $, and then
  $ m_\mu = \lfloor t_\mu \rfloor $. For the tail part, we need to discuss different
  situations.

  First, if $ d = 1 $, then we directly have
  \begin{align}
    \frac{1}{\mu}\sum_{m_\mu + 1}^\infty \frac{\lambda_i}{\lambda_i + \mu}
    & \le \frac{1}{\mu}\left(\mu + \int_{t_\mu}^\infty c_3\exp(-c_4 x)~\d x\right) \\
    & = 1 + \frac{1}{c_4}\,.
  \end{align}
  When $ \mu < c_3\exp(-(c_4\vee\frac{1}{c_4})) $, we can combine these terms into
  $ 3t_\mu $.

  Second, if $ d \ge 2 $, when $ \mu \le c_3\exp(-c_4 e) $, we have that
  \begin{equation}
    \exp(-c_4 x^{1/d}) \le \exp(-c_4\frac{t_\mu^{1/d}}{\ln t_\mu}\ln x)
    = x^{-c_4\frac{t_\mu^{1/d}}{\ln t_\mu}}\,.
  \end{equation}
  Then,
  \begin{align}
    \frac{1}{\mu}\sum_{m_\mu + 1}^\infty \lambda_i & \le 1 + \frac{1}{\mu}\int_{t_\mu}^\infty
    c_3 \exp(-c_4 x^{-1/d})~\d x \\
    & \le 1 + \frac{c_3}{\mu}\int_{t_\mu}^\infty x^{-c_4\frac{t_\mu^{1/d}}{\ln t_\mu}} \\
    & = 1 + \frac{t_\mu}{c_4\frac{t_\mu^{1/d}}{\ln t_\mu} - 1}\,.
  \end{align}
  When $ c_4 \ge 1 $, we may assume that $ \mu \le c_3\exp(-c_4 d^2) $, and then
  \begin{equation}
    c_4\frac{t_\mu^{1/d}}{\ln t_\mu} - 1 \ge \frac{c_4 d^2}{2d\ln d} \ge \frac{4}{3}\,.
  \end{equation}
  So the upper bound has the form $ 5t_\mu $.

  When $ c_4 < 1 $, we may assume that $ \mu \le c_3\exp(-d^2/c_4) $, and then
  \begin{equation}
    c_4\frac{t_\mu^{1/d}}{\ln t_\mu} - 1 \ge \frac{d^2/c_4}{2d\ln(d/c_4)} \ge \frac{4}{3}\,.
  \end{equation}
  So the upper bound also has the form $ 5t_\mu $.
\end{proof}

Now with all these preparation, we can complete our proof of \thmref{thm:realizable}
\begin{proof}
  Under the assumption of \thmref{thm:realizable}, $ B_0 = 1 $ and $ r^* =0 $
  in \thmref{thm:est-error}. By \lemref{lem:upper-bd-r}, we have
  \begin{equation}
    r = (900C_1^2 + 120C_2)N(V+1)\frac{\ln(1/\lambda)}{m}
    + (5+72V)\frac{\ln(1/\delta)}{m}\,.
  \end{equation}
  By \thmref{thm:app-err}, we have
  \begin{equation}
    R^h_{\p,\lambda}(f_0) - R^* \le 2\sqrt{\mu}R + 4R^2\frac{\lambda}{2}\,,
  \end{equation}
  with probability $ 1-\delta $ when $ N\ge 5d(\mu)\log\left(\frac{16d(\mu)}{\delta}\right) $.

  When the spectrum of $ \Sigma $ decays polynomially,
  \begin{equation}
    d(\mu) \le \frac{2c_2}{c_2-1}\left(\frac{c_1}{\mu}\right)^{1/c_2}\,.
  \end{equation}
  Assume $ m>c_1^{-(2+c_2)/(2c_2)} $.
  By choosing $ \mu=c_1m^{-\frac{2c_2}{2+c_2}}<c_1 $ and $ \lambda=m^{-c_2/(2+c_2)} $, we have
  \begin{equation}
    N = 10c_{1,2}m^{\frac{2}{2+c_2}}(\ln(32c_{1,2}m^{\frac{2}{2+c_2}})+\ln(1/\delta))\,,
  \end{equation}
  and
  \begin{align}
    R^h_{\p,\lambda}(f_{m,N,\lambda}) - R^* & \le \frac{12R}{m^{\frac{c_2}{2+c_2}}}
    + \frac{12R^2}{m^{\frac{c_2}{2+c_2}}}\\
    & + 30 C_{1,2} c_{1,2}(\ln32c_{1,2}+\frac{2}{2+c_2}\ln m+\ln(1/\delta))
    (V+1)\frac{c_2}{2+c_2}\frac{\ln m}{m^{\frac{c_2}{2+c_2}}} \\
    & + \frac{15+216V}{m}\ln(1/\delta)\,,
  \end{align}
  with probability $ 1-4\delta $,
  where
  \begin{equation}
    C_{1,2} = 900C_1^2+120C_2,\quad c_{1,2}=\frac{c_2 c_1^{1/c_2}}{c_2-1}\,.
  \end{equation}

  When the spectrum of $ \Sigma $ decays sub-exponentially,
  \begin{equation}
    d(\mu) \le 5c_4^{-d}\ln^d(c_3/\mu)\,.
  \end{equation}
  Assume that $ m>\exp(-(c_4\vee\frac{1}{c_4})d^2/2) $.
  By choosing $ \mu= c_3/m^2 $ and $ \lambda = 1/m $, we have
  \begin{equation}
    N = 25c_{d,4}\ln^d(m)(\ln(80c_{d,4}\ln^d(m))+\ln(1/\delta))\,,
  \end{equation}
  and
  \begin{align}
    R^h_{\p,\lambda}(f_{m,N,\lambda}) - R^* & \le \frac{12R\sqrt{c_3}}{m} \\
    & + \frac{12R^2}{m} + 150 C_{1,2} c_{d,4}(\ln160 c_{d,4}+d\ln\ln m+\ln(1/\delta))
    (V+1)\frac{\ln^{d+1} m}{m} \\
    & + \frac{15+216V}{m}\ln(1/\delta)\,,
  \end{align}
  with probability $ 1-4\delta $,
  where
  \begin{equation}
    C_{1,2} = 900C_1^2+120C_2,\quad c_{d,4}=\left(\frac{2}{c_4}\right)^d\,.
  \end{equation}
\end{proof}

\section{Proof of \thmref{thm:unrealizable}}
\label{app:unrealizable}

\thmref{thm:unrealizable} requires a further analysis of the approximation
error of RKHS to the Bayes classifier. This part adopts \cite{Steinwart2008}'s
idea of margin noise exponent. We say that the data distribution $ \p $ has margin
noise exponent $ \beta>0 $ if there exists a positive constant $ c $ such that
\begin{equation}
        \int_{\{x:\Delta(x)<t\}} \vert y\vert\d\p(x,y) \le ct^{-\beta}\quad\forall t\in(0,1)\,.
\end{equation}
Therefore, infinite $ \beta $ corresponds to our separation condition with $ \tau=1 $.
However, the original proof of the approximation error that works with the
margin noise exponent cannot be generalized to the case of infinite $ \beta $,
because the coefficient $ \Gamma(d+\beta)/2^d $ will blow up (see Theorem
8.18 in \cite{Steinwart2008}). This issue can be resolved by modifying the
original proof, as shown below.

\begin{lemma}
  \label{lem:app-separation}
  Assume that there exists $ \tau>0 $ such that
  \begin{equation}
    \int_{\{x: \Delta(x)<t \}} \vert
    2\eta(x)-1\vert~\mathrm{d}\mathbb{P}_\mathcal{X}(x) = 0\,,
    \forall t<\tau\,,
  \end{equation}
  where $ \mathcal{X}\subset B^d(\rho) $ and $ \eta(x) $ is a version of $ \mathbb{P}(y=1|x) $. Then there exists a function $ f $ in the RKHS generated by the kernel
  \begin{equation}
    k_\gamma(x,x')=\exp\left(-\frac{\Vert x-x' \Vert^2}
    {2\gamma^2}\right)
  \end{equation}
  where $ \gamma<\tau/\sqrt{d-1} $ such that
  \begin{align*}
    R^h(f) - R^* & < \frac{4\tau^{d-2}}{\Gamma(d/2)}\exp\left(-\frac{\tau^2}{\gamma^2}\right)\gamma^{d-2}\,,
    \\
    \Vert f \Vert_\mathcal{F} & \le \frac{(\sqrt{\pi/2}\rho^2)^{d/2}}{\Gamma(d/2+1)}\gamma^{-d/2}
  \end{align*}
  and
  \begin{equation}
    \vert f(x)\vert \le 1\,.
  \end{equation}
\end{lemma}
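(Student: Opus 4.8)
The plan is to adapt Steinwart's approximation argument for Gaussian kernels (Theorem 8.18 in \cite{Steinwart2008}), but to replace the integration against the margin noise exponent---whose constant $\Gamma(d+\beta)/2^d$ blows up as $\beta\to\infty$---by a direct estimate that exploits the hard separation. First I would construct the candidate $f$ explicitly as a mollification of the Bayes rule. Write $g(x)=\operatorname{sgn}(2\eta(x)-1)$ for the $\{-1,+1\}$-valued hinge-optimal classifier, extend it to all of $\r^d$ so that it remains constant on every ball of radius $\tau$ centered at a point $x$ with $\Delta(x)\ge\tau$ (possible because on the set where $\vert 2\eta-1\vert>0$ the two class supports are $\tau$-separated), and set $f=g*\psi_\gamma$, where $\psi_\gamma(y)=\pi^{-d/2}\gamma^{-d}\exp(-\Vert y\Vert^2/\gamma^2)$ is the normalized Gaussian whose Fourier transform is the square root of that of $k_\gamma$.

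Then I would verify the three conclusions in turn. For $\vert f\vert\le 1$: since $\psi_\gamma$ is a probability density and $\vert g\vert\le 1$, the convolution satisfies $\vert f(x)\vert\le\Vert g\Vert_\infty\Vert\psi_\gamma\Vert_{L^1}=1$. For the RKHS norm, I would use the Fourier characterization of the Gaussian RKHS: with the above choice of $\psi_\gamma$ the growing exponential $\exp(\gamma^2\Vert\xi\Vert^2/2)$ in the Gaussian RKHS norm is exactly cancelled, so $\Vert f\Vert_{\mathcal{F}}^2$ is proportional to $\gamma^{-d}\Vert g\Vert_{L^2(\r^d)}^2$; since $g$ is supported (up to a risk-null set) in $\mathcal{X}\subset B^d(\rho)$, we bound $\Vert g\Vert_{L^2}^2\le\operatorname{vol}(B^d(\rho))=\pi^{d/2}\rho^d/\Gamma(d/2+1)$, and tracking the constants yields the stated bound $\tfrac{(\sqrt{\pi/2}\rho^2)^{d/2}}{\Gamma(d/2+1)}\gamma^{-d/2}$.

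The heart of the argument is the excess risk. I would first use the pointwise calibration identity for the hinge loss: because $\vert f\vert\le 1$ and $g$ is hinge-optimal, the pointwise excess conditional risk equals $\vert 2\eta(x)-1\vert(1-g(x)f(x))=\vert 2\eta(x)-1\vert\,\vert g(x)-f(x)\vert$, so that $R^h(f)-R^*=\int\vert 2\eta(x)-1\vert\,\vert g(x)-f(x)\vert\,\d\mathbb{P}_\mathcal{X}(x)$. The separation hypothesis makes this integral vanish on $\{\Delta(x)<\tau\}$, leaving only $\{\Delta(x)\ge\tau\}$. On that region $g$ is constant on $B(x,\tau)$, so $\vert f(x)-g(x)\vert=\vert\int_{\Vert y\Vert\ge\tau}(g(x-y)-g(x))\psi_\gamma(y)\,\d y\vert\le 2\int_{\Vert y\Vert\ge\tau}\psi_\gamma$, a Gaussian tail. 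Passing to polar coordinates and integrating the radial tail $\int_\tau^\infty r^{d-1}e^{-r^2/\gamma^2}\,\d r$ by parts produces the leading term $\tfrac{\gamma^2}{2}\tau^{d-2}e^{-\tau^2/\gamma^2}$, giving a bound of the form $\tfrac{\tau^{d-2}}{\Gamma(d/2)}e^{-\tau^2/\gamma^2}$ (times the appropriate power of $\gamma$); combining with $\int\vert 2\eta-1\vert\,\d\mathbb{P}_\mathcal{X}\le 1$ finishes the estimate.

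Two points deserve care and constitute the main obstacles. The role of the hypothesis $\gamma<\tau/\sqrt{d-1}$ is precisely to guarantee that $r^{d-1}e^{-r^2/\gamma^2}$ is decreasing on $[\tau,\infty)$ (its log-derivative $\tfrac{d-1}{r}-\tfrac{2r}{\gamma^2}$ is negative once $r>\gamma\sqrt{(d-1)/2}$), which legitimizes the clean tail estimate and keeps the integration-by-parts remainder lower order. The genuinely delicate step, and the reason we cannot simply invoke Steinwart's theorem, is the extension of $g$ to $\r^d$: one must arrange that $g$ is locally constant near every relevant point without introducing spurious mass and confirm that the extension still lies in $L^2$ with the volume bound above. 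This is exactly where the finite margin-noise-exponent proof degenerates, and replacing that integration by the single-region Gaussian tail bound is the crux of the refinement.
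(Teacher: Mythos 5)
Your overall route is the same as the paper's: the paper's $f$ is exactly the mollification $\operatorname{sgn}(2\eta-1)*\psi_\gamma$ (realized there as the image of a scaled sign function under the Gaussian feature map, which is how it bounds $\Vert f\Vert_{\mathcal{F}}$ by $\gamma^{-d/2}\Vert g\Vert_{L^2}$), and it likewise uses Zhang's identity, the separation hypothesis, and the polar-coordinate tail estimate legitimized by $\gamma<\tau/\sqrt{d-1}$. The genuine gap is at the step you yourself call the crux: the extension you require does not exist in general. The hypothesis only guarantees that a positive point $x$ with $\Delta(x)\ge\tau$ and a negative point $x'$ with $\Delta(x')\ge\tau$ satisfy $\Vert x-x'\Vert\ge\tau$, not $2\tau$; so the radius-$\tau$ balls around them can overlap, and no $g$ can be identically $+1$ on one and $-1$ on the other (take $d=1$, $\mathcal{X}=\{0,\tau\}$, $\eta(0)=1$, $\eta(\tau)=0$). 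The best consistent assignment (e.g.\ nearest-class) makes $g$ constant only on balls of radius $\tau/2$, so your Gaussian tail starts at $\tau/2$ and yields $\exp(-\tau^2/(4\gamma^2))$, not the stated $\exp(-\tau^2/\gamma^2)$. Moreover this is not a reparable detail within your framework: in the two-point example, for any $\Vert g\Vert_\infty\le1$ one has $f(0)-f(\tau)\le\Vert\psi_\gamma-\psi_\gamma(\cdot-\tau)\Vert_{L^1}=2-4\int_{\tau/2}^{\infty}\psi_\gamma$, so the excess risk of $g*\psi_\gamma$ is at least of order $\exp(-\tau^2/(4\gamma^2))$; reaching the claimed exponent requires amplitudes of $g$ exceeding $1$ (a deconvolution-type construction), which your plan excludes. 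There is also an internal inconsistency: your norm bound uses that $g$ is supported in $B^d(\rho)$, but the extended $g$ is supported only in $B^d(\rho+\tau/2)$, which changes the volume constant.

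To be fair, you have put your finger on the soft spot of the paper's own argument. The paper avoids any extension by keeping $g=0$ off the two classes and asserting that $B(x,\Delta(x))\subset\mathcal{X}_1$ for $x\in\mathcal{X}_1$; this is unjustified whenever that ball meets the neutral set $\{\eta=1/2\}$ (in particular the complement of $\mathcal{X}$, which a ball of radius $\Delta(x)\ge\tau$ around a point of $\mathcal{X}$ typically does meet), and it is precisely the issue your extension is meant to cure. But your proposal does not close the gap: what it can actually deliver, after the feasible $\tau/2$ repair, is the lemma with $\tau$ replaced by $\tau/2$ in the exponent and $\rho$ by $\rho+\tau/2$ in the norm bound. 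That weaker statement still suffices for Theorem~\ref{thm:unrealizable} after rescaling $\gamma$ by a constant factor, but it is not the lemma as stated, and you should say so explicitly rather than claim the stated constants.
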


\begin{proof}
First we define
\begin{equation}
  \mathcal{X}_y := \left\{ x: (2\eta(x)-1)y > 0 \right\}\text{ for }y=\pm 1\,,
\end{equation}
and $ g(x):=(\sqrt{2\pi}\gamma)^{-d/2}\mathrm{sign}(2\eta(x)-1) $.
It is square integrable since $ \eta(x)=1/2 $ for all $ x\notin\mathcal{X} $.
Then we map $ g $ onto the RKHS by the integral operator determined by $ k_\gamma $,
\begin{equation}
  f(x) := \int_{\mathbb{R}^d} \phi_\gamma(t;x)g(t)~\mathrm{d}t\,,
\end{equation}
where
\begin{equation}
  \phi_\gamma(t;x) =
  \left(\frac{2}{\pi\gamma^2}\right)^{d/4}
  \exp\left(-\frac{\Vert x-t\Vert^2}{\gamma^2}\right) \,.
\end{equation}
Note that it is a special property of Gaussian kernel that the feature map onto
$ L^2(\mathbb{R}^d) $ also has a Gaussian form. For other type of kernels,
we may not have such a convenient characterization.

We know that
\begin{equation}
  \Vert f\Vert_\mathcal{H} = \Vert g\Vert_{L^2}
  \le \frac{\sqrt{\mathrm{Vol}(B^d(\rho))}}{(\sqrt{2\pi}\gamma)^{d/2}}
  = \frac{(\sqrt{\pi/2}\rho^2)^{d/2}}{\Gamma(d/2+1)}\gamma^{-d/2}\,.
\end{equation}
Moreoever,
\begin{align*}
  \vert f(x)\vert & \le \int_{\mathbb{R}^d}\phi_\gamma(t;x)(\sqrt{2\pi}\gamma)^{-d/2}~\mathrm{d}t \\
  & = (\pi\gamma^2)^{-d/2}\int_{\mathbb{R}^d}\exp\left(-\frac{\Vert x-t\Vert^2}{\gamma^2}\right)~\mathrm{d}t \\
  & = 1\,.
\end{align*}
Since $ f $ is uniformly bounded by $ 1 $, by Zhang's inequality, we have
\begin{equation}
  R^h(f) - R^* = \mathbb{E}_{\mathbb{P}_\mathcal{X}}
  (\vert f(x)-\mathrm{sign}(2\eta(x)-1)\vert\vert2\eta(x)-1\vert)\,.
\end{equation}
Now we give an upper bound on $ \vert f(x)-\mathrm{sign}(2\eta(x)-1)\vert $.
Assume $ x\in\mathcal{X}_1 $. Then we know that $ f(x)\le\mathrm{sign}(2\eta(x)-1)=1 $,
\begin{align*}
  1 - f(x) & = 1 - \left(\frac{1}{\pi\gamma^2}\right)^{d/2}
  \int_{\mathbb{R}^d}\exp\left(-\frac{\Vert x-t\Vert^2}{\gamma^2}\right)
  \mathrm{sign}(2\eta(t)-1)~\mathrm{d}t \\
  & = 1 - \left(\frac{1}{\pi\gamma^2}\right)^{d/2}
  \int_{\mathcal{X}_1}\exp\left(-\frac{\Vert x-t\Vert^2}{\gamma^2}\right)~\mathrm{d}t \\
  & + \left(\frac{1}{\pi\gamma^2}\right)^{d/2}
  \int_{\mathcal{X}_{-1}}\exp\left(-\frac{\Vert x-t\Vert^2}{\gamma^2}\right)~\mathrm{d}t \\
  & \le 2 - 2\left(\frac{1}{\pi\gamma^2}\right)^{d/2}
  \int_{B(x,\Delta(x))}\exp\left(-\frac{\Vert x-t\Vert^2}{\gamma^2}\right)~\mathrm{d}t \\
  & \le 2 - 2\left(\frac{1}{\pi\gamma^2}\right)^{d/2}
  \int_{B(0,\Delta(x))}\exp\left(-\frac{\Vert t\Vert^2}{\gamma^2}\right)~\mathrm{d}t \\
  & = 2\left(\frac{1}{\pi\gamma^2}\right)^{d/2}
  \int_{\mathbb{R}^d\backslash B(0,\Delta(x))}\exp
  \left(-\frac{\Vert t\Vert^2}{\gamma^2}\right)~\mathrm{d}t \\
  & = \frac{4}{\Gamma(d/2)\gamma^d}
  \int_{\Delta(x)}^\infty\exp\left(-\frac{ r^2}{\gamma^2}\right)r^{d-1}~\mathrm{d}r \,.
\end{align*}
Here the key is that $ B(x,\Delta(x))\subset\mathcal{X}_1 $
when $ x\in\mathcal{X}_1 $. For $ x\in\mathcal{X}_{-1} $,
we have the same upper bound for $ 1+f(x) $. Therefore, we have
\begin{align*}
  R^h(f)-R^* & \le \frac{4}{\Gamma(d/2)\gamma^d}\int_\mathcal{X}\int_{0}^\infty
  \mathbf{1}_{(\Delta(x),\infty)}(r)\exp\left(-\frac{r^2}{\gamma^2}\right)r^{d-1}
  \vert2\eta(x)-1\vert~\mathrm{d}r\mathrm{d}\mathbb{P}_\mathcal{X}(x) \\
  & = \frac{4}{\Gamma(d/2)\gamma^d}\int_{0}^\infty\int_\mathcal{X}\mathbf{1}_{(0,r)}
  (\Delta(x))\exp\left(-\frac{r^2}{\gamma^2}\right)r^{d-1}
  \vert2\eta(x)-1\vert~\mathrm{d}\mathbb{P}_\mathcal{X}(x)\mathrm{d}r\\
  & \le \frac{4}{\Gamma(d/2)\gamma^d}\int_{\tau}^\infty\exp
  \left(-\frac{r^2}{\gamma^2}\right)r^{d-1}~\mathrm{d}r\\
\end{align*}
To get the last line, we apply the assumption on the expected label clarity.
Now we only need to give an estimate of the integral.
\begin{equation}
  \int_\tau^\infty\exp\left(-\frac{r^2}{\gamma^2}\right)r^{d-1}~\mathrm{d}r
  \le \int_{\tau}^\infty C\exp\left(-\alpha\frac{r^2}{\gamma^2}\right)~\mathrm{d}r
\end{equation}
where
\begin{equation}
  C = \tau^{d-1}\exp(-(d-1)/2)\quad\alpha = 1-2\gamma^2\tau^{-2}(d-1)\,.
\end{equation}
It is required that $ \gamma < \sqrt{2}\tau/\sqrt{d-1} $ so that $ \alpha>0 $.
And then we can give an upper bound to the excess risk
\begin{equation}
  R^h(f)-R^* \le \frac{4\tau^d}{\Gamma(d/2)(2\tau^2-(d-1)\gamma^2)}
  \exp\left(-\frac{\tau^2}{\gamma^2}\right)\gamma^{d-2}\,.
\end{equation}
If we further require that $ \gamma<\tau/\sqrt{d-1} $, then we have a simpler upper bound,
\begin{equation}
  \frac{4\tau^{d-2}}{\Gamma(d/2)}\exp\left(-\frac{\tau^2}{\gamma^2}\right)\gamma^{d-2}\,.
\end{equation}

\end{proof}

Some remarks on this result:
\begin{enumerate}
\item The proof follows almost step by step the proof of \cite{Steinwart2008}.
The only difference occurs at where we apply our assumption.
\item The approximation error is basically dominated by $ \exp(-c/\gamma^2) $,
and thus leaves us large room for balancing with the norm of the approximator.
\item The proof here only works for Gaussian kernel. A similar conclusion
may hold for General RBF kernels using the fact that any RBF kernel can be
expressed as an average of Gaussian kernel over different values of $ \gamma $.
A relevant reference is \cite{Scovel2010}.
\end{enumerate}

The last component for the proof of \thmref{thm:unrealizable} is the sub-exponential
decay rate of the spectrum of $ \Sigma $ determined by the Gaussian kernel.
The distribution of the spectrum of the convolution operator with respect to a
distribution density function $ p $ has been studied by \cite{Widom1963}. It
shows that the number of eigenvalues of $ \Sigma $ greater than $ \mu $
is asymptotic to $ (2\pi)^{-d} $ times the volume of
\begin{equation*}
  \left\{ (x,\xi):p(x)\hat{k}(\xi) > \mu \right\}\,,
\end{equation*}
where $ \hat{k} $ is the Fourier transform of the kernel function $ k $.
By applying \cite{Widom1963}'s work in our case, we have the following lemma.
It is essentially Corollary 27 in \cite{Harchaoui2008}, but our version explicitly
shows the dependence on the band width $ \beta $.
\begin{lemma}
  \label{lem:spectrum}
  Assume $ \hat{k}(\xi)\le \alpha\exp(-\beta\Vert\xi \Vert^2) $.
  If the density function $ p(x) $ of probability distribution
  $ \mathbb{P_\mathcal{X}} $ is bounded by $ B $ and $ \mathcal{X} $ is a bounded
  subset of $ \r^d $ with radius $ \rho $, then
  \begin{equation*}
    \lambda_i(\Sigma) \le C\alpha B\exp\left(-\beta\left(\frac{4\Gamma^{4/d}(d/2+1)}{\pi^{4/d\rho^2}}\right)
    i^{2/d}\right)\,,
  \end{equation*}
  where $ \lambda_1\ge\lambda_2\ge\cdots $ are
  eigenvalues of $ \Sigma $ in descending order.
\end{lemma}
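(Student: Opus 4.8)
The plan is to apply the Weyl-type asymptotic of \cite{Widom1963} recalled just before the statement. Introduce the eigenvalue counting function $N(\mu):=\#\{i:\lambda_i(\Sigma)>\mu\}$; Widom's result bounds $N(\mu)$ by $(2\pi)^{-d}$ times the phase-space volume of the super-level set $\{(x,\xi):p(x)\hat{k}(\xi)>\mu\}$. Since the $\lambda_i(\Sigma)$ are listed in descending order, the inequality $N(\mu)<i$ is equivalent to $\lambda_i(\Sigma)\le\mu$. So the strategy is: (i) produce a clean closed-form upper bound $\bar N(\mu)\ge N(\mu)$ from the hypotheses; (ii) solve $\bar N(\mu)=i$ for $\mu$; (iii) conclude $\lambda_i(\Sigma)\le\mu$ with that value of $\mu$.

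First I would bound the super-level set. The hypotheses $p(x)\le B$ and $\hat k(\xi)\le\alpha\exp(-\beta\Vert\xi\Vert^2)$ give $p(x)\hat k(\xi)\le\alpha B\exp(-\beta\Vert\xi\Vert^2)$, so every $(x,\xi)$ in the super-level set has $x\in\mathcal{X}\subset B^d(\rho)$ and $\Vert\xi\Vert<R$ with $R=\sqrt{\beta^{-1}\ln(\alpha B/\mu)}$. Hence the set sits inside the product $B^d(\rho)\times B^d(R)$, and using $\mathrm{Vol}(B^d(r))=\pi^{d/2}r^d/\Gamma(d/2+1)$ together with the factor $(2\pi)^{-d}$ I get
\begin{equation*}
  N(\mu)\le\bar N(\mu)=\frac{\rho^d}{2^d\,\Gamma^2(d/2+1)}\,\beta^{-d/2}\,\ln^{d/2}\!\left(\frac{\alpha B}{\mu}\right)\,,
\end{equation*}
where the powers of $\pi$ coming from the two ball volumes cancel against $(2\pi)^{-d}$.

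Then I would invert. Setting $\bar N(\mu)=i$ and solving gives $\ln(\alpha B/\mu)=\beta\,i^{2/d}\bigl(2^d\Gamma^2(d/2+1)/\rho^d\bigr)^{2/d}$, and taking the $2/d$-th power converts $\Gamma^2(d/2+1)$ into $\Gamma^{4/d}(d/2+1)$, $\rho^d$ into $\rho^2$, and the $d$-dependent numerical constant into the coefficient displayed in the statement (whose precise powers of $\pi$ and $2$ track the normalization constant in Widom's asymptotic and the Fourier convention used for $\hat k$). Exponentiating and recalling $\lambda_i(\Sigma)\le\mu$ for this choice of $\mu$ delivers the claimed bound
\begin{equation*}
  \lambda_i(\Sigma)\le C\alpha B\exp\!\left(-\beta\,\frac{4\,\Gamma^{4/d}(d/2+1)}{\pi^{4/d}\rho^2}\,i^{2/d}\right)\,.
\end{equation*}

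The hard part will be the two structural points that make Widom's theorem applicable, rather than the elementary volume computation. First, $\Sigma$ is not a pure convolution but the composition of convolution by $k$ with multiplication by the density $p$; one must pass to the self-adjoint operator $M_{\sqrt p}\,K\,M_{\sqrt p}$ (with $K$ convolution by $k$ and $M_{\sqrt p}$ multiplication by $\sqrt p$), which shares the nonzero spectrum of $\Sigma$ and whose symbol is precisely $p(x)\hat k(\xi)$. Second, \cite{Widom1963}'s statement is an asymptotic as $\mu\to0$ rather than an a priori uniform inequality, so upgrading it to the bound $N(\mu)\le\bar N(\mu)$ valid over the range of $i$ of interest requires either a non-asymptotic form of the Weyl law or absorbing the low-frequency transient into the constant $C$ and restricting to large enough $i$; this is the same mechanism underlying Corollary~27 of \cite{Harchaoui2008}, and our only addition is to keep the dependence on the bandwidth $\beta$ explicit throughout.
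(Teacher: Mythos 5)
Your proposal is correct and follows essentially the same route as the paper's proof: both invoke Widom's phase-space result, bound the super-level set $\{(x,\xi):p(x)\hat{k}(\xi)>\mu\}$ by the product $B^d(\rho)\times B^d(R)$ with $R=\sqrt{\beta^{-1}\ln(\alpha B/\mu)}$, and invert the resulting volume bound to obtain the $\exp(-c\beta i^{2/d})$ decay (your counting-function formulation $N(\mu)$ is just the generalized inverse of the paper's equi-measurable rearrangement $\phi$ with $\lambda_i\le C\phi((2\pi)^d i)$). You are in fact more explicit than the paper about the two caveats---that $\Sigma$ is $M_{\sqrt{p}}KM_{\sqrt{p}}$ rather than a pure convolution, and that Widom's asymptotic must be upgraded to a uniform bound via the constant $C$---both of which the paper handles implicitly, and your slightly stronger constant (missing the $\pi^{4/d}$ in the denominator) still implies the stated bound.
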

\begin{proof}
  Denote by $ E_t $ the set
  \[
    \left\{ (x,\xi):\hat{k}(\xi)p(x)>t \right\}\,.
  \]
  The volume, that is, the Lebesgue measure of $ E_t $ is denoted by $ \mathrm{Vol}(E_t) $.
  By Theorem II of \cite{Widom1963}, the non-increasing function $ \phi(\alpha) $
  defined on $ \r^+ $ which is equi-measurable with $ p(x)\hat{k}(\xi) $ describes
  the behaviour of $ \lambda_i $s. Indeed, $ \lambda_i\le C\phi((2\pi)^d i) $.
  By the volume formula of $ 2d $-dimensional ball we have the following estimate,
  \begin{align*}
    \sup \{ s\in\r^+:\phi(s)>t \} & = \mathrm{Vol}(E_t) \\
    & \le C_{d,\rho}\left(\frac{\ln(\alpha B/t)}{\beta}\right)^{d/2}\,,
  \end{align*}
  where
  \begin{equation}
    C_{d,\rho} = \frac{\rho^d\pi^{d+2}}{\Gamma^2(d/2+1)}\,.
  \end{equation}
  Solving for $ t $, we know that
  \[
    \phi(s) \le
    \alpha B\exp\left(-\beta\left(\frac{s}{A}\right)^{2/d}\right)\,.
  \]
  Therefore, we have
  \begin{align}
    \lambda_i(\Sigma) & \le
    C\alpha B\exp\left(-\beta\left(\frac{(2\pi)^d i}{A}\right)^{2/d}\right) \\
    & = C\alpha B\exp\left(-\beta\left(\frac{4\Gamma^{4/d}(d/2+1)}{\pi^{4/d}\rho^2}\right)
    i^{2/d}\right)\,.
  \end{align}
\end{proof}

Now we can prove \thmref{thm:unrealizable}.
\begin{proof}
  Note that, by \lemref{lem:app-separation}, we can construct $ g\in\mathcal{F} $ such
  that $ R^h_{\p,\lambda}-R^* $ is controlled. And by \thmref{thm:app-err},
  we can find an $ f_0\in\mathcal{F}_N $ with similar risk to $ g $.
  And this will be our $ f_0 $ as required by \thmref{thm:est-error}. So we have
  \begin{align}
    R^h_{\p,\lambda}(f_0)-R^* & \le \frac{2(\sqrt{\pi/2}\rho^2)^d}{\Gamma^2(d/2+1)}
    \frac{\lambda}{\gamma^d} + \frac{2(\sqrt{\pi/2}\rho^2)^{d/2}}{\Gamma(d/2+1)}\sqrt{\mu} \\
    & + \frac{4\tau^{d-2}}{\Gamma(d/2)}\exp\left(\frac{\tau^2}{\gamma^2}\right)\gamma^{d-2}\,,
  \end{align}
  and $ \Vert f_0\Vert_{\mathcal{F}_N}\le 2 $,
  with probability $ 1-\delta $, when $ N = 5d(\mu)\ln(16d(\mu)/\delta) $.
  We choose $ \gamma = \tau/\sqrt{\ln m} $ and $ \lambda = 1/m $.
  Under the boundedness assumption on the density function and the property of
  Gaussian kernel, we know that by \lemref{lem:spectrum},
  \begin{equation}
    \lambda_i(\Sigma) \le C\gamma B\exp\left(-\gamma^2
    \frac{4\Gamma^{4/d}(d/2+1)}{\pi^{4/d}\rho^2}
    i^{2/d}\right)\,.
  \end{equation}
  And similar to the second part of \thmref{thm:realizable}, by identifying
  \begin{equation}
    c_3=C\gamma B = CB\tau/\sqrt{\ln m}\quad
    c_4=\frac{4\tau^2\Gamma^{4/d}(d/2+1)}{\pi^{4/d}\rho^2\ln m}:=\frac{A}{\ln m}\,,
  \end{equation}
  and choosing $ \mu = c_3/(m^{2d^2}\vee \exp(\frac{d^2}{c_4}\vee c_4d^2)) $, we have
  \begin{equation}
    d(\mu) \le 5d^{2d}(c_4^{-2d}\vee 1\vee c_4^{-d}2^d\ln^d m)\,.
  \end{equation}
  Then when $ m\ge \exp(A) $, we have $ d(\mu)\le 5(A^2\wedge A/2)^{-d}\ln^{2d} m $,
  and
  \begin{align}
    N & = 5d(\mu)(\ln(16d(\mu))+\ln(1/\delta)) \\
    & \le 25(A^2\wedge A/2)^{-d}\ln^{2d}m(\ln(80(A^2\wedge A/2)^{-d})+2d\ln\ln m+\ln(1/\delta))\,.
  \end{align}
  Plug $ N $ and $ \lambda $ into \eqref{eq:upper-bd-r}.
  \begin{align}
    3r & = 75 C_{1,2} c_{d,\tau,\rho}(\ln(80c_{d,\tau,\rho})+2d\ln\ln m+\ln(1/\delta))
    (V+1)\frac{\ln^{2d+1} m}{m} \\
    & + \frac{15+216V}{m}\ln(1/\delta) + 3r^*\,,
  \end{align}
  where
  \begin{equation}
    C_{1,2} = 900C_1^2+120C_2,\quad c_{d}=(A^2\wedge A/2)^{-d}\,.
  \end{equation}
  We can bound $ r^* $ by $ R^h_{\p,\lambda}(f_0)-R^* $. Therefore, the overall
  upper bound on the excess error is
  \begin{align}
    R^1_{\p,\lambda}(f_{m,N,\lambda})-R^* & \le \frac{18(\sqrt{\pi/2}\rho^2)^d}{\Gamma^2(d/2+1)}
    \frac{\ln^{d/2}m}{\tau m} + \frac{18(\sqrt{\pi/2}\rho^2)^{d/2}}{\Gamma(d/2+1)}
    \frac{\sqrt{CB\tau}\ln^{1/4}m}{m^{d^2}} \\
    & + \frac{36\tau^{d-2}}{\Gamma(d/2)}\frac{\tau^{d-2}}{m\ln^{d/2-1}m} \\
    & + 75 C_{1,2} c_{d,\tau,\rho}(\ln(80c_{d,\tau,\rho})+2d\ln\ln m+\ln(1/\delta))
    (V+1)\frac{\ln^{2d+1} m}{m} \\
    & + \frac{15+216V}{m}\ln(1/\delta)\,.
  \end{align}
\end{proof}

\section{Learning Rate without Optimized Feature Maps}
\label{app:unif}

In this section, we discuss the learning rate of RFSVM without an optimized
feature map. As shown by \cite{Rudi2017}, RFKRR can achieve excess risk of
$ O(1/\sqrt{m}) $ using $ O(\sqrt{m}\log(m)) $ features. However, it is
inappropriate to directly compare this result with the learning rate in
classification scenario. Because as surrogate loss functions, least square
loss has a different calibration function with for example hinge loss. Basically,
$ O(\epsilon) $ risk under square loss only implies $ O(\sqrt{\epsilon}) $ risk under
$ \mathrm{0-1} $ loss, while $ O(\epsilon) $ risk under hinge loss implies
$ O(\epsilon) $ risk under $ \mathrm{0-1} $ loss. Therefore, \cite{Rudi2017}'s
analysis only implies an excess risk of $ O(m^{-1/4}) $ in classification
problems with $ \tilde{O}(\sqrt{m}) $ features.

For RFSVM, we expect a similar result. Without assuming an optimized feature map,
the leverage score can only be upper bounded by $ \kappa^2/\mu $, where $ \kappa $
is the upper bound on the function $ \phi(\omega;x) $ for all $ \omega,x $.
Substituting $ \kappa^2/\mu $ for $ d(\mu) $ in the proofs of learning rates,
we need to balance $ \sqrt{\mu} $ with $ 1/(\mu m) $ to achieve the optimal rate.
This balance is not affected by the spectrum of $ \Sigma $ or whether $ f^*_\p $
belongs to $ \mathcal{F} $. Obviously, setting $ \mu=m^{-2/3} $, we get a learning
rate of $ m^{-1/3} $, with $ \tilde{O}(m^{2/3}) $ random features. Even though
this result is also new for RFSVM in regularized formulation, the gap to
previous analysis like \cite{Rahimi2008} is too large. Considering that
the random features used in practice that are not optimized also have quite
good performance, we need further analysis on RFSVM without optimized feature
map.

\newpage{}
\section{Supplementary Figures}
\label{app:figures}

\begin{figure}[hb]
    \begin{tabular}{cc}
        \includegraphics[scale=0.45]{opt_resultsE3N1.eps} &
        \includegraphics[scale=0.45]{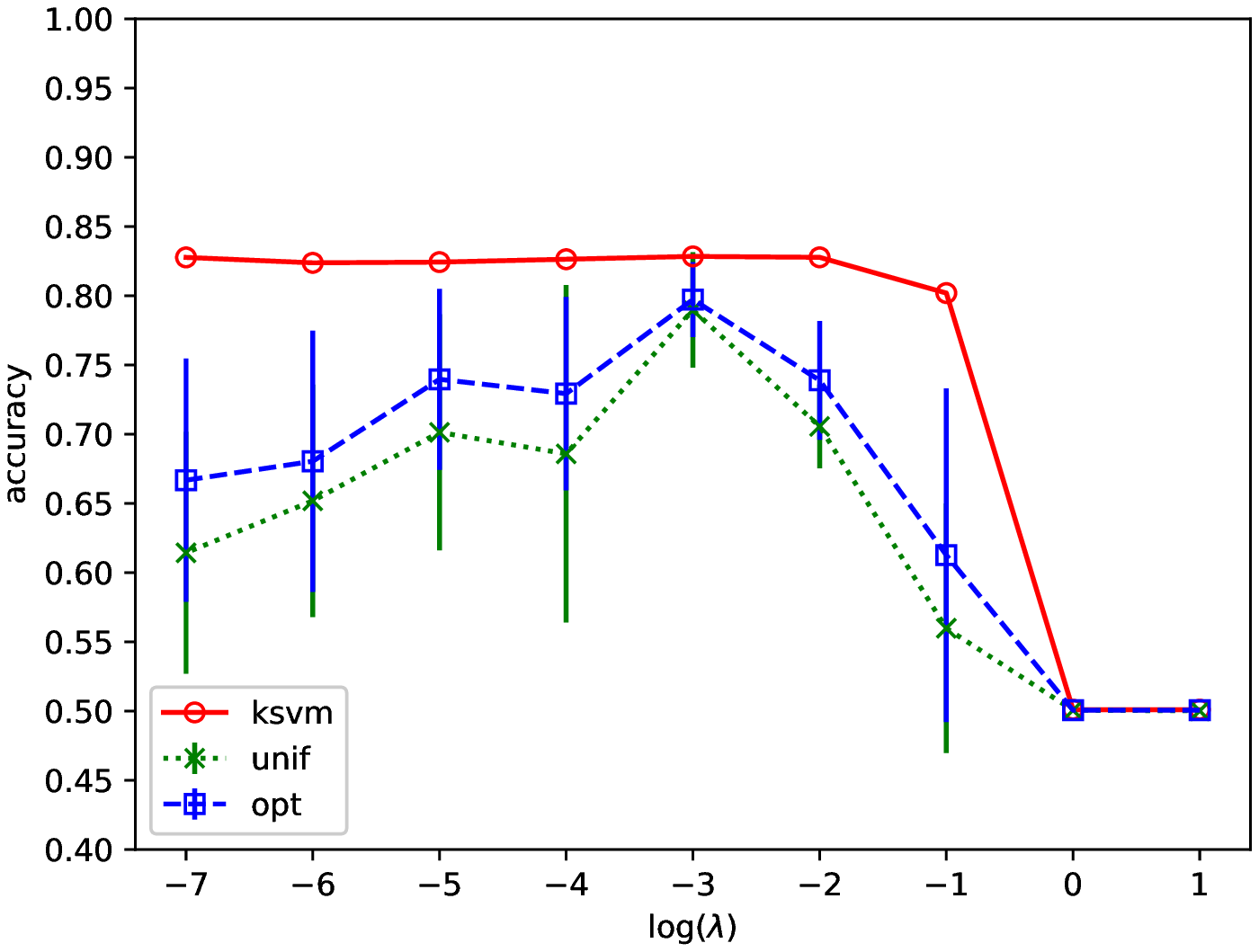} \\
        $N=1$ & $N=3$ \\
        \includegraphics[scale=0.45]{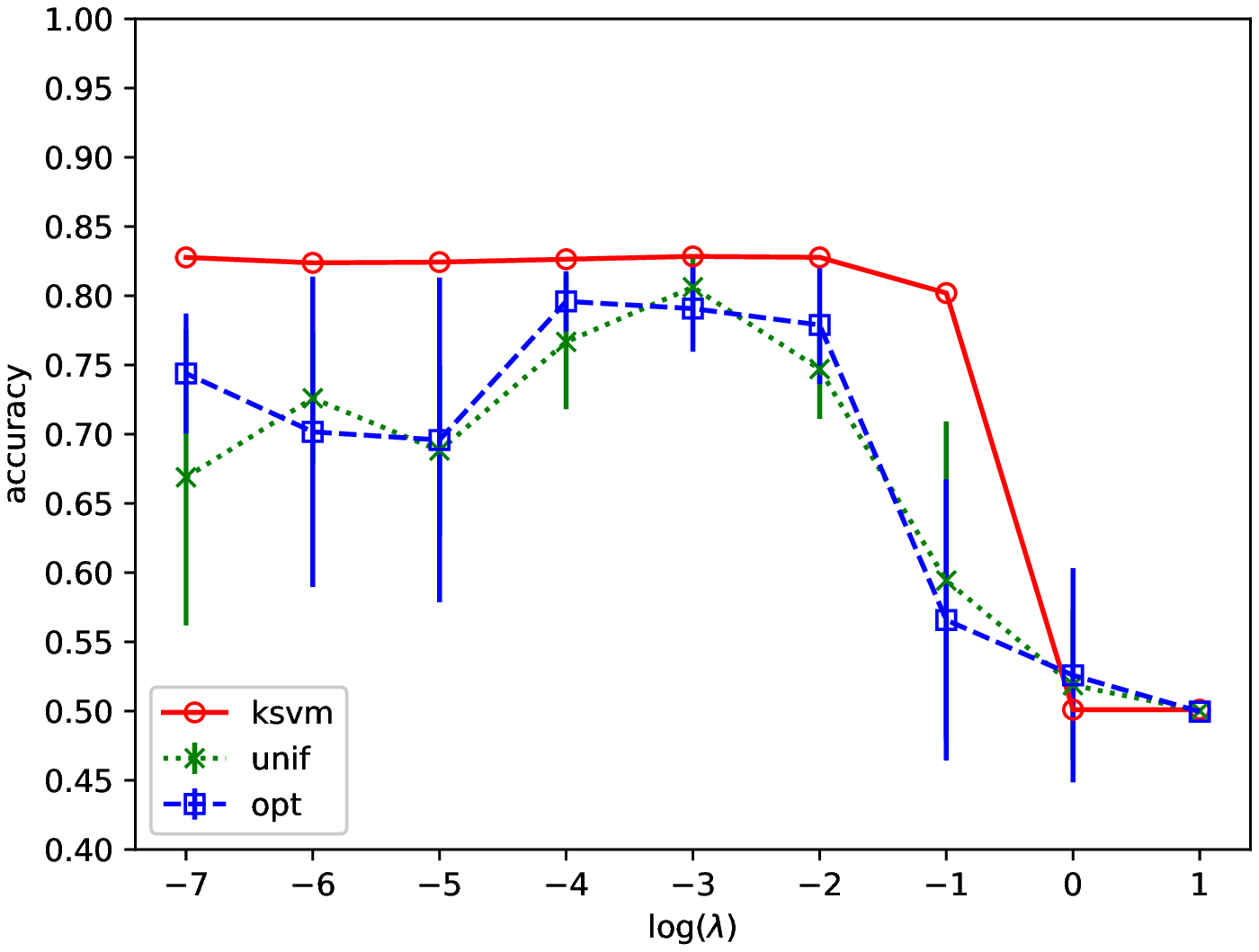} &
        \includegraphics[scale=0.45]{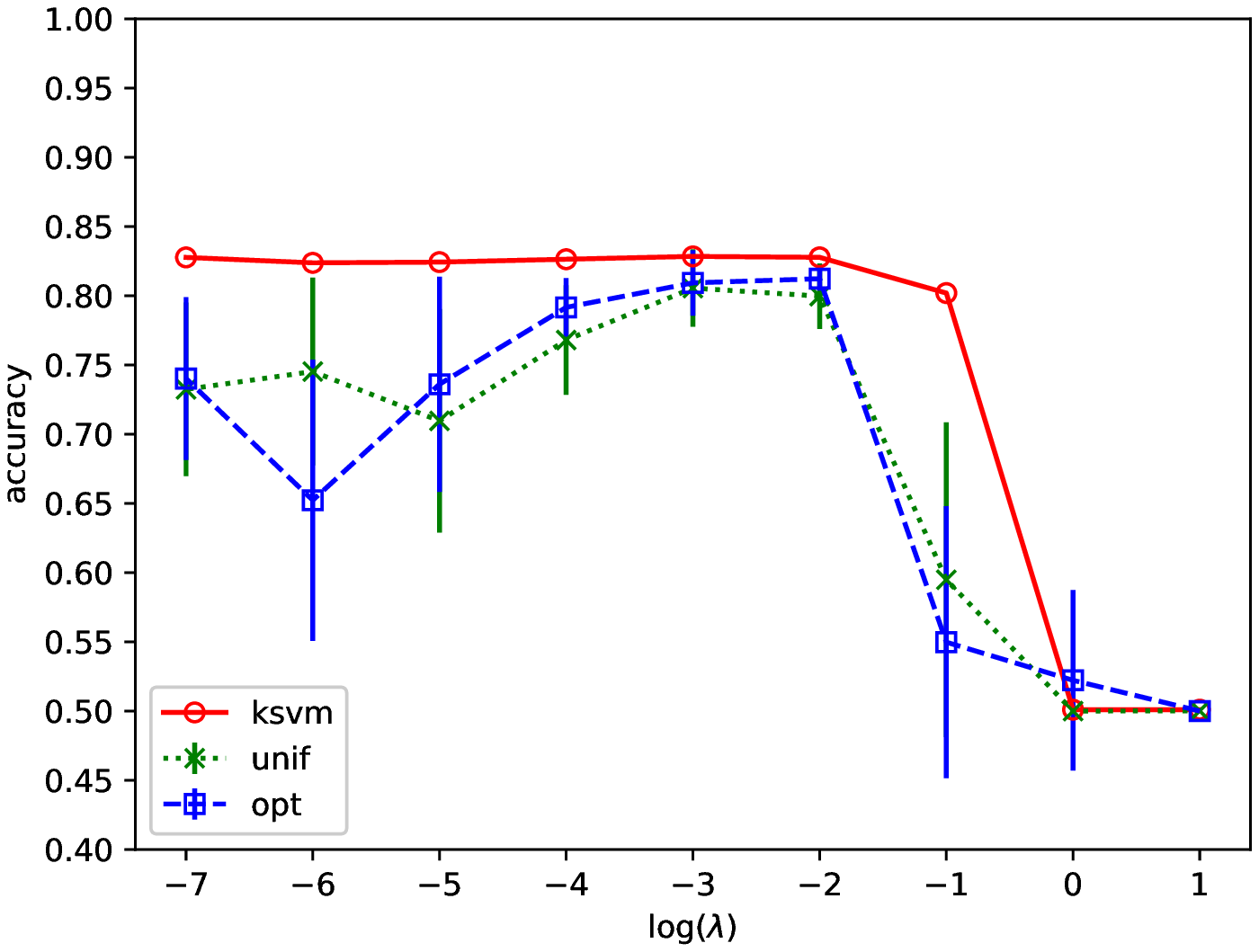} \\
        $N=5$ & $N=10$ \\
    \end{tabular}
    \caption{Comparison between RFSVMs with KSVM Using Gaussian Kernel.}
    \medskip
    \small
    ``ksvm'' is for KSVM with Gaussian kernel,
    ``unif'' is for RFSVM with direct feature sampling,
    and ``opt'' is for RFSVM with reweighted feature sampling.
    Error bars represent standard deviation over 10 runs.
    Each sub-figure shows the performance of RFSVM with different number
    of features $N$.
\end{figure}

\begin{figure}
  \includegraphics[width=\textwidth]{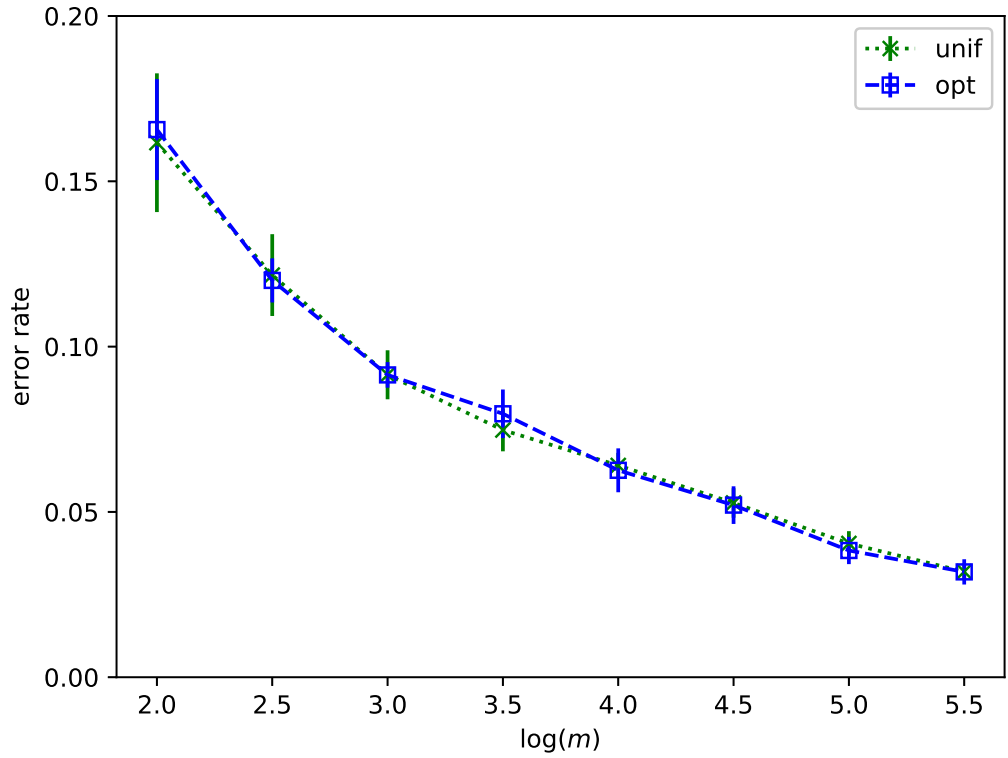}
  \caption{
      The excess risks of RFSVMs
      with the simple random feature selection (``unif'')
      and the reweighted feature selection (``opt'') are shown for different sample
      sizes in the binary classification task over 10 dimensional data. The
      data with probability $ 0.9 $ to be -1 are within the 10 dimensional
      ball centered at the origin and radius $ 0.9 $, and the data with
      probability 0.9 to be 1 are within the shell of radius 1.1 to 2.
      The error rate is the excess risk. The error bars represent the
      standard deviation over 10 runs.
      }
\end{figure}

\begin{figure}
  \includegraphics[width=\textwidth]{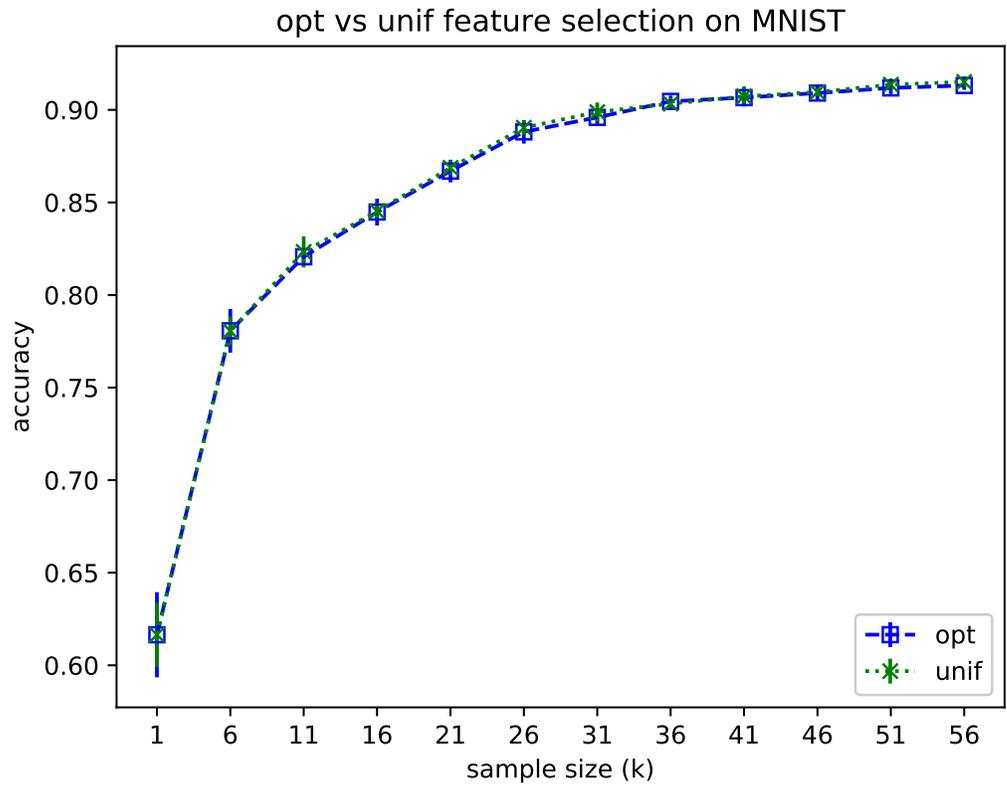}
  \caption{The classification accuracy of RFSVM with the simple random feature
  selection (``unif'') and the reweighted feature selection (``opt'')
  are shown for different sample sizes in the hand-written digit recognition
  (MNIST)}
\end{figure}

\end{document}